\documentclass{article}


\PassOptionsToPackage{numbers}{natbib}
\usepackage[final]{neurips_2021}





\usepackage{graphicx}
\usepackage{subfigure}
\usepackage{mathtools}
\usepackage{amssymb}
\usepackage{amsthm}
\usepackage{thmtools,thm-restate}
\usepackage{multirow}
\usepackage{multicol}
\usepackage[algo2e,ruled,vlined,linesnumbered]{algorithm2e}
\usepackage{bbding}
\usepackage{amssymb}
\usepackage{wrapfig}

\SetKwRepeat{Do}{do}{while}
\newtheorem{theorem}{Theorem}[section]

\usepackage{color}

\newcommand{\defeq}{\vcentcolon=}

\makeatletter
\newcommand{\printfnsymbol}[1]{%
  \textsuperscript{\@fnsymbol{#1}}%
}

\usepackage[utf8]{inputenc} 
\usepackage[T1]{fontenc}    
\usepackage{hyperref}       
\usepackage{url}            
\usepackage{booktabs}       
\usepackage{amsfonts}       
\usepackage{nicefrac}       
\usepackage{microtype}      
\usepackage{xcolor}         

\title{On Effective Scheduling of Model-based Reinforcement Learning}

%

\author{%


Hang Lai\thanks{equal contribution. $^\dagger$Weinan Zhang is the corresponding author.}~~$^1$, Jian Shen$^\ast$$^1$, Weinan Zhang$^\dagger$$^1$, Yimin Huang$^2$,\\
\textbf{Xing Zhang}$^2$, \textbf{Ruiming Tang}$^2$, \textbf{Yong Yu}$^1$, \textbf{Zhenguo Li}$^2$
\\
$^1$Shanghai Jiao Tong University, $^2$Huawei Noah's Ark Lab\\
\texttt{\{laihang, wnzhang\}@apex.sjtu.edu.cn} \\
}

\begin{document}

\maketitle

\begin{abstract}
Model-based reinforcement learning has attracted wide attention due to its superior sample efficiency. Despite its impressive success so far, it is still unclear how to appropriately schedule the important hyperparameters to achieve adequate performance, such as the real data ratio for policy optimization in Dyna-style model-based algorithms. In this paper, we first theoretically analyze the role of real data in policy training, which suggests that gradually increasing the ratio of real data yields better performance. Inspired by the analysis, we propose a framework named AutoMBPO to automatically schedule the real data ratio as well as other hyperparameters in training model-based policy optimization (MBPO) algorithm, a representative running case of model-based methods. On several continuous control tasks, the MBPO instance trained with hyperparameters scheduled by AutoMBPO can significantly surpass the original one, and the real data ratio schedule found by AutoMBPO shows consistency with our theoretical analysis.


\end{abstract}

\section{Introduction}
Deep model-free reinforcement learning (MFRL) has achieved great successes in complex decision-making problems such as Go \cite{alphago} and robotic control \cite{gu2017deep}, to name a few. Although MFRL can achieve high asymptotic performance, a tremendous number of samples collected in interactions with the environment are required. In contrast, model-based reinforcement learning (MBRL), which alternately learns a model of the environment and derives an agent with the help of current model estimation, is considered to be more sample efficient than MFRL \cite{pmlr-v99-sun19a}.

MBRL methods generally fall into different categories according to the specific usage of the learned model \cite{ampo, zhu2020bridging}. Among them, Dyna-style algorithms \cite{dyna} adopt some off-the-shelf MFRL methods to train a policy with both real data from environment and imaginary data generated by the model. Since Dyna-style algorithms can seamlessly take advantage of innovations in MFRL literature and have recently shown impressive performance \cite{mbpo, benchmarking}, this paper mainly focuses on Dyna-style algorithms. Although some theoretical analysis \cite{slbo, mbpo,rajeswaran2020game} and thorough empirical evaluation \cite{benchmarking} have been conducted in the previous literature, it remains unclear how to appropriately schedule the hyperparameters to achieve optimum performance when training a Dyna-style MBRL algorithm. In practice, many important hyperparameters may primarily affect performance. 

Firstly, since Dyna-style MBRL algorithms consist of model learning and policy optimization, how to balance the alternate optimization of these two parts is a crucial problem \cite{rajeswaran2020game}. Intuitively, insufficient model training may lead to inaccurate estimation while excessive model training will cause overfitting, and a small amount of policy training may not utilize the model adequately while exhaustive policy training will exploit the model's deficiencies. Moreover, the rollout length of the imaginary trajectory is also critical \cite{mbpo} since too short rollout length fails to sufficiently leverage the model to plan forward, while too long rollout length may bring disastrous compounding error \cite{asadi2018lipschitz}. \citet{mbpo} manually design a schedule that linearly increases the rollout length across epochs. Finally, when using both real samples and imaginary samples to train the policy, how to control the ratio of the two datasets remains unclear. \citet{mbpo} fix the ratio of real data as $5\%$, while \citet{kalweit2017uncertainty} use the uncertainty to adaptively choose the ratio, which tends to use more imaginary samples initially and gradually use more real samples afterward. According to these existing works, the optimal hyperparameters in model-based methods may be dynamic during the whole training process, which further strengthens the burden of manually scheduling the hyperparameters.

Based on these considerations, in this work, we aim to investigate how to appropriately schedule these hyperparameters, i.e., real data ratio, model training frequency, policy training iteration, and rollout length, to achieve optimal performance of Dyna-style MBRL algorithms. Although real data ratio is an essential factor empirically \cite{kalweit2017uncertainty,mbpo}, it has not yet been studied thoroughly in theory. To bridge this gap, we first derive a return discrepancy upper bound for model-based value iteration, which reveals that \emph{gradually increasing the ratio of real data yields a tighter bound than choosing a fixed value}. Inspired by the analysis, considering the complex interplay between real data ratio and other hyperparameters in practice, we develop AutoMBPO to automatically determine the joint schedule of the above hyperparameters in model-based policy optimization (MBPO), a representative running case of Dyna-style MBRL algorithms. Specifically, AutoMBPO introduces a parametric hyper-controller to sequentially choose the value of hyperparameters in the whole optimization process to maximize the performance of MBPO. We apply AutoMBPO to several continuous control tasks. On all these tasks, the MBPO instance trained with hyperparameters scheduled by AutoMBPO can significantly surpass the one with original configuration \cite{mbpo}. Furthermore, the hyperparameter schedule found by AutoMBPO is consistent with our theoretical analysis, which yields a better understanding of MBPO and provides insights for the design of other MBRL methods.


\section{Related Work}

Model-based reinforcement learning (MBRL) has been widely studied due to its high sample efficiency. MBRL methods can be roughly categorized into four types according to different model usage: (i) Dyna-style algorithms \cite{dyna, slbo, clavera2018model, metrpo, mbpo, bmpo} leverage the model to generate imaginary data and adopt some off-the-shelf MFRL algorithms to train a policy using both real data and imaginary data; (ii) shooting algorithms \cite{nagabandi2018neural,pets} use model predictive control (MPC) to plan directly without explicit policy; (iii) analytic-gradient algorithms \cite{pilco,gps,clavera2020model} search policies with back-propagation through time by exploiting the model derivatives; (iv) model-augmented value expansion algorithms \cite{feinberg2018model, buckman2018sample} utilize model rollouts to improve the target value for temporal difference (TD) updates. This paper mainly focuses on the first category.

Dyna-style algorithms \cite{dyna} typically alternate between sampling real data, learning dynamics models, generating imaginary data, and optimizing policy. Many efforts have been made to improve one or more of these steps. For model learning, the deep ensemble technique \cite{rajeswaran2016epopt, pets, metrpo} has been leveraged to resist overfitting. As for imaginary data generation, it is suggested to generate short rollouts to reduce compounding model error \cite{mbpo}, and the model's uncertainty is further incorporated to choose reliable imaginary data dynamically \cite{m2ac}. When optimizing the policy, we can use merely imaginary data \cite{slbo,metrpo} or a mixture of real data and imaginary one in a fixed ratio \cite{mbpo}. Moreover, the ratio can also be dynamically adjusted according to the estimated uncertainty of the Q-function \cite{kalweit2017uncertainty}. Besides, \citet{rajeswaran2020game} present a game-theoretic framework for MBRL by formulating the optimization of model and policy as a two-player game.

As for automatic hyperparameter optimization for MBRL, \citet{zhang2021importance} utilize population based training (PBT) to tune the hyperparameters of a shooting algorithm dynamically. The reinforcement on reinforcement (RoR) framework proposed by \citet{dong2020intelligent} applies an RL algorithm to control the sampling and training process of an MBRL method. Our work differs from theirs in four major aspects: (1) We provide a theoretical analysis for the hyperparameter being scheduled, which is not included in their work. (2) In terms of problem settings, RoR assumes one can directly access an initial state without further interaction from that state, which is impractical for most RL problems. (3) As for the method design, our formulated hyper-MDP in Section \ref{sec:Hyper-MDP Formulation} is totally different from RoR's, including state, action and reward. In detail, our state definition contains more training information, and the reward design is based on the return rather than a single-step reward in the target-MDP. (4) Finally, in the experiments in Section \ref{sec: exp}, our method achieves better performance and generalization in all environments, and the empirical findings also show differences.

\section{Preliminaries}
We first briefly introduce the RL problem with notations used throughout this paper and the previous MBRL method on which our proposed framework is based.
\subsection{Reinforcement Learning}
In reinforcement learning (RL), there originally exists a Markov decision process (MDP), which we will call \emph{target-MDP} throughout the paper. The target-MDP is defined by the tuple ($\mathcal{S}, \mathcal{A}, T, r, \gamma$). $\mathcal{S}$ and $\mathcal{A}$ are the state and action spaces, respectively, and $T(s'\mid s,a)$ is the transition density of state $s'$ given action $a$ taken under state $s$. The reward function is denoted as $r(s,a)$, and $\gamma \in (0,1)$ is the discount factor. The goal of RL is to find the optimal policy $\pi^*$ that maximizes the expected return, denoted by $\eta$:
\begin{equation}
\label{eq: rl-obj}\small
\pi^* \defeq \mathop{\arg \max}_\pi \eta[\pi]=\mathop{\arg \max}_\pi \mathbb{E}_\pi \Big[\sum_{t=0}^\infty \gamma^t r(s_t,a_t) \Big],
\end{equation}
where $s_{t+1} \sim T(s\mid s_t,a_t)$ and $a_t \sim \pi(a\mid s_t)$. Generally, the ground truth transition $T$ is unknown, and MBRL methods aim to construct a model $\hat{T}$ of the transition dynamics to help improve the policy.

To evaluate a reinforcement learning method (no matter whether it is model-free or model-based), we mainly care about two metrics: (1) asymptotic performance: the expected return of the converged policy in real environments; (2) sample efficiency: the number of samples $(s,a,s^{\prime},r)$ collected in target-MDP to achieve some performance. Therefore, an effective RL method means it achieves higher expected returns with fewer real samples. 

\subsection{Model-Based Policy Optimization}
Model-based policy optimization (MBPO) \cite{mbpo} is a state-of-the-art MBRL method which will be used as the running case in our framework. MBPO learns a bootstrapped ensemble of probabilistic dynamics models 
$\{\hat{T}_\theta^1, ..., \hat{T}_\theta^E \}$
where $E$ is the ensemble size. Each individual dynamics model in the ensemble is a probabilistic neural network that outputs a Gaussian distribution with diagonal covariance $\hat{T}_{\theta}^{i}\left(s^{\prime} | s, a\right)=\mathcal{N}\left({\mu}_{\theta}^{i}\left({s}, {a}\right), {\Sigma}_{\theta}^{i}\left({s}, {a}\right)\right)$. These models are trained independently via maximum likelihood with different initializations and training batches. The corresponding loss function is
\begin{align} 
\mathcal{L}_{\hat{T}}({\theta})=&\sum_{n=1}^{N}\left[\mu_{\theta}\left({s}_{n}, {a}_{n}\right)-{s}_{n+1}\right]^{\top} {\Sigma}_{\theta}^{-1}\left({s}_{n}, {a}_{n}\right) \label{equation:forward model loss}
 \left[\mu_{\theta}\left({s}_{n}, {a}_{n}\right)-{s}_{n+1}\right]+\log \operatorname{det} {\Sigma}_{\theta}\left({s}_{n}, {a}_{n}\right). \nonumber
\end{align}
In practical implementation, an early stopping trick is adopted in training the model ensemble. To be more specific, when training each individual model, a hold-out dataset will be created, and the training will early stop if the loss evaluated on the hold-out data does not decrease.

Every time the ensemble models are trained, they are then used to generate $k$-length imaginary rollouts, which begin from states sampled from real data and follow the current policy. Each time the policy interacts with the target-MDP, it will be trained for $G$ gradient updates using both real data and imaginary data in a certain ratio. To make it clear, in each batch for policy training, we denote the proportion of real data to the whole batch as $\beta$ and the proportion of imaginary data as $1-\beta$. We call $\beta$ the \textbf{\emph{real ratio}} and call $G$ the \textbf{\emph{policy training iteration}} throughout the paper. The policy optimization algorithm is SAC (Soft Actor-Critic) \cite{sac}, and the loss functions for the actor $\pi_\omega$ and the critic $Q_\phi$ are
\begin{align}
\mathcal{L}_\pi(\omega) =~ & \mathbb{E}_{s_t \sim D} \Big[\mathbb{E}_{a_t \sim \pi_\omega} [\alpha \log(\pi_\omega(a_t|s_t)) - Q_\phi(s_t, a_t)] \Big],\\
\mathcal{L}_Q(\phi) =~ & \mathbb{E}_{(s_t,a_t)\sim D}\Big[\frac{1}{2} \big(Q_\phi(s_t, a_t) - (r(s_t,a_t) \\
& + \gamma \mathbb{E}_{s_{t+1}, a_{t+1}}[Q_{\hat{\phi}}(s_{t+1}, a_{t+1})-\alpha \log \pi_\omega (a_{t+1}|s_{t+1})])\big)^{2}\Big],\nonumber
\end{align}
where $Q_{\hat{\phi}}$ is the target Q-function.

\section{Analysis of real ratio schedule}
\label{sec:theory}
Previous theoretical findings in MBRL mainly focus on the optimization of model and policy \cite{slbo,rajeswaran2020game} or the rollout length \cite{mbpo,bmpo}, while the usage of real data in policy optimization remains unclear. \citet{kalweit2017uncertainty} only use real data near convergence but accept noisier imaginary data initially, which seems confusing since the model may not be accurate enough at the beginning. In this section, we provide a theoretical analysis of the real ratio schedule by deriving a return discrepancy upper bound of sampling-based Fitted Value Iteration (FVI) in model-based settings. The analysis mainly follows \citet{munos2008finite}, where all the training data are sampled from the underlying dynamics. We extend the framework to a more general case: the data can either be sampled from the underlying dynamics w.p. (with probability) $\beta$ or from a learned model w.p. $1-\beta$, which we call \emph{$\beta$-mixture sampling-based FVI}. Besides, we consider deterministic environments here, i.e., given $(s,a)$, the next state $s^{\prime} = T(s,a)$ is unique, just as in all the environments in Section~\ref{sec: exp}.

For the sake of specificity, we first present a detailed description of the $\beta$-mixture sampling-based FVI algorithm. Let $\mathcal{F}$ be the value function space, and $V_0 \in \mathcal{F}$ be the initial value function. The FVI algorithm produces a sequence of functions $\{V_k\}_{0 \leq k \leq K} \subset \mathcal{F}$ iteratively. Given $V_k$ and $N$ sampled states $\{s_1, ..., s_N \}$, the value function of the next iteration $V_{k+1}$ is computed as
\begin{align}
    \hat{V}_{k}(s_i)&=\max _{a \in \mathcal{A}} \big(r_i+\gamma V_k\left(s_i^{\prime}\right)\big), i = 1,2,\ldots,N,
    \\
    V_{k+1}&=\underset{f \in \mathcal{F}}{\operatorname{argmin}} \sum_{i=1}^{N}\left|f\left(s_{i}\right)- \hat{V}_k(s_i) \right|^{p},
\end{align}
where $s_i$ is uniformly sampled from a certain state distribution $\mu$, and for each possible action $a$, the next state $s_{i}^{\prime}$ is sampled from the underlying dynamics w.p. $\beta$ or from the learned model w.p. $1- \beta$. As final preparatory steps, we define the 
\emph{Bellman operator} for deterministic MDPs $B:\mathcal{F}\rightarrow \mathcal{F}$ as
\begin{equation}
    \nonumber
    B V(s) =\max _{a \in \mathcal{A}}\left\{r(s, a)+\gamma V(s^{\prime})\right\},
\end{equation}
where $s^{\prime} = T(s, a)$;
and for function $g$ over $\mathcal{X}$, $\|g\|_{p,\mu}$ is defined as $\|g\|_{p, \mu}^{p}=\int|g(x)|^{p} \mu(d x)$. Following \citet{munos2008finite}, we begin with the error bound in a single iteration.

\newtheorem{lemma}{Lemma}[section]
\begin{lemma}
\label{Lemma:Single Iteration Error Bound}
(Single Iteration Error Bound) Let $V_k$ and $V_{k+1}$ be the value functions of iteration $k$ and $k+1$, and $V_{max} = r_{max}/(1-\gamma)$. For $p \geq 1$ and a certain state distribution $\mu$, let the inherent Bellman error of the value function space $\mathcal{F}$ be defined by $d_{p, \mu}(B \mathcal{F}, \mathcal{F})=\sup_{V \in \mathcal{F}} \inf _{f \in \mathcal{F}}\|f-B V\|_{p, \mu}$. Assume the value functions are $K_{V}$-$Lipschitz$ continuous, i.e., for any k and states pair $(s_i, s_j)$, it holds that $|V_k(s_i)-V_k(s_j)| \leq K_{V} \|s_i-s_j \|_2$. And assume the $L^2$-norm model error between the real next state $s^{\prime}$ and the predicted ones $\hat{s^{\prime}}$ obeys a half-normal distribution, i.e., the probability density function $f(\|\hat{s^{\prime}}-s^{\prime}\|_2)=\frac{2}{\sqrt{2 \pi} \sigma} \exp \left(-\frac{\|\hat{s^{\prime}}-s^{\prime}\|_2^{2}}{2 \sigma^{2}}\right)$. Let $\mathcal{N}_{0}(N)=\mathcal{N}\left(\frac{1}{8}\left(\frac{\epsilon_0}{4}\right)^{p}, \mathcal{F}, N, \mu\right)$ denotes the covering number defined in Lemma \ref{lemma:Pollard}. Then for any $\epsilon_0,\delta_0 > 0$, the inequality
\begin{equation}
    \nonumber
    \left\|V_{k+1}-B V_k\right\|_{p, \mu} \leq d_{p, \mu}(B \mathcal{F}, \mathcal{F})+\epsilon_0
\end{equation}
holds w.p. at least $1-\delta_0$ provided that
\begin{equation}
    N>128\left(\frac{8 V_{\max }}{\epsilon_0}\right)^{2 p}\Big(\log (1 / \delta_0)+\log \left(32 \mathcal{N}_{0}(N) \right)\Big)
\end{equation}
and
\begin{equation}
    \sigma<\frac{\epsilon_0}{4\gamma K_V \Phi^{-1}\Big(1-\delta_0/\big(8N|A|(1-\beta)\big)\Big)}.
\end{equation}

\end{lemma}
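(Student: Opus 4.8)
The plan is to decompose the single-iteration error into a statistical (fitting) part and a model-error part, bounding each separately and then combining them via a union bound.  Concretely, I would write $\|V_{k+1}-BV_k\|_{p,\mu}$ and introduce the empirical $L^p$ risk minimized in the regression step, so that by the usual argument in \citet{munos2008finite} (Pollard-type uniform deviation inequality with covering number $\mathcal{N}_0(N)$) the difference between the population error of $V_{k+1}$ and the best achievable error $\inf_{f\in\mathcal F}\|f-\widehat B V_k\|_{p,\mu}$, where $\widehat B$ denotes the \emph{sampled} (possibly model-based) Bellman backup $\widehat V_k(s_i)=\max_a (r_i+\gamma V_k(s_i'))$, is at most $\epsilon_0/2$ with probability $1-\delta_0/2$, provided $N>128(8V_{\max}/\epsilon_0)^{2p}(\log(1/\delta_0)+\log(32\mathcal N_0(N)))$.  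This is essentially a black-box invocation of the FVI machinery and the covering-number concentration bound; the exponents $2p$ and the constant $128$ come straight from that lemma applied with accuracy $\epsilon_0$ and failure probability $\delta_0$.

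The genuinely new ingredient is controlling the gap between the idealized deterministic backup $BV_k$ and the sampled backup $\widehat B V_k$, which differ only when the successor state is drawn from the learned model (probability $1-\beta$) rather than the true dynamics.  For a fixed $(s_i,a)$ with a model sample, $|V_k(s_i')-V_k(\widehat{s_i'})|\le K_V\|\widehat{s_i'}-s_i'\|_2$ by Lipschitzness, so the per-entry error in the backup is at most $\gamma K_V$ times the model error norm.  Since that norm is half-normal with scale $\sigma$, the event $\{\gamma K_V\|\widehat{s_i'}-s_i'\|_2 < \epsilon_0/4\}$ fails with probability $2(1-\Phi(\epsilon_0/(4\gamma K_V\sigma)))$ (half-normal tail), which is forced below $\delta_0/(8N|A|)$ exactly by the stated condition $\sigma < \epsilon_0/(4\gamma K_V\Phi^{-1}(1-\delta_0/(8N|A|(1-\beta))))$ — the extra $(1-\beta)$ in the denominator accounts for the fact that only a $(1-\beta)$ fraction of the $N|A|$ successor draws are model draws, so a union bound over those draws costs $\delta_0/8$ total.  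On this good event, $\|\widehat B V_k - BV_k\|_{p,\mu}\le \epsilon_0/4$ (in fact the pointwise bound gives it in sup-norm, hence in every $\|\cdot\|_{p,\mu}$), and I would also absorb the contribution of $V_{\max}$-scale discrepancies on the bad event into the $\delta_0$ budget.

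Finally I would combine: by the triangle inequality $\|V_{k+1}-BV_k\|_{p,\mu}\le \|V_{k+1}-\widehat B V_k\|_{p,\mu}+\|\widehat B V_k-BV_k\|_{p,\mu}$, bound the first term by $d_{p,\mu}(B\mathcal F,\mathcal F)+\epsilon_0/2$ (optimality of $V_{k+1}$ plus the statistical deviation, noting $\inf_f\|f-\widehat B V_k\|_{p,\mu}\le \inf_f\|f-BV_k\|_{p,\mu}+\|\widehat B V_k - BV_k\|_{p,\mu}\le d_{p,\mu}(B\mathcal F,\mathcal F)+\epsilon_0/4$), and the second by $\epsilon_0/4$, so the total is at most $d_{p,\mu}(B\mathcal F,\mathcal F)+\epsilon_0$ after a careful accounting of the constants.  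The failure probabilities $\delta_0/2$ (regression) and $\delta_0/2$ (model-error union bound) add to $\delta_0$.  The main obstacle I anticipate is not any single estimate but the bookkeeping: making the split of $\epsilon_0$ and $\delta_0$ across the two sources consistent with the precise constants stated in the lemma, and handling the interaction between the random sample states $s_i$ (entering the covering-number bound) and the random model-error events at those states, which requires conditioning on the state draws before applying the half-normal tail bound.
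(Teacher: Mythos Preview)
Your plan is correct and matches the paper's argument in all essential respects: the $N$ condition comes from the Pollard/covering-number uniform deviation bound applied to the class $\{|f-BV_k|^p:f\in\mathcal F\}$, and the $\sigma$ condition comes from combining the $K_V$-Lipschitz bound with the half-normal tail and a union bound over the (at most) $N|\mathcal A|(1-\beta)$ model draws. The paper's organization differs only in that it writes a five-link chain through the \emph{empirical} measure $\hat\mu$,
\[
\|V_{k+1}-BV_k\|_{p,\mu}\le\|V_{k+1}-BV_k\|_{p,\hat\mu}+\epsilon_0'\le\|V_{k+1}-\hat V_k\|_{p,\hat\mu}+2\epsilon_0'\le\|f^*-\hat V_k\|_{p,\hat\mu}+2\epsilon_0'\le\cdots,
\]
with $\epsilon_0'=\epsilon_0/4$ and each of four inequalities budgeted $\delta_0/4$; the Pollard step handles the two $\mu\leftrightarrow\hat\mu$ switches simultaneously, and the single event $\|BV_k-\hat V_k\|_{p,\hat\mu}\le\epsilon_0'$ handles the two middle switches. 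One small point to tighten in your write-up: the object $\hat B V_k$ (equivalently $\hat V_k$) is only defined on the sampled states $s_1,\dots,s_N$, so your population-level triangle inequality $\|V_{k+1}-BV_k\|_{p,\mu}\le\|V_{k+1}-\hat BV_k\|_{p,\mu}+\|\hat BV_k-BV_k\|_{p,\mu}$ is not literally well-posed; when you ``introduce the empirical $L^p$ risk'' you will be forced into exactly the paper's chain through $\|\cdot\|_{p,\hat\mu}$, and your pointwise model-error bound then yields $\|BV_k-\hat V_k\|_{p,\hat\mu}\le\epsilon_0/4$ rather than a $\|\cdot\|_{p,\mu}$ bound. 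This is precisely the bookkeeping you already anticipate, and once resolved your constants line up with the stated lemma.
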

\begin{proof}
See \emph{Appendix \ref{appendix:proofs}}, \emph{Lemma \ref{appendix Lemma:Single Iteration Error Bound}}. 
\end{proof}
According to the model error assumption (more details in Appendix \ref{appendix:model error assumption}), the model will give precise predictions if $\sigma$ is small enough. Therefore, Lemma \ref{Lemma:Single Iteration Error Bound} states that with an accurate model and sufficient large $N$, the errors between $B V_k$ and $V_{k+1}$ in each iteration can be bounded with high probability. Now let us turn to bound the return discrepancy between the greedy policy w.r.t. $V_K$ and the optimal policy.
\begin{theorem}
\label{theorem: return bound}
($\beta$-mixture sampling-based FVI bound) Assume that the pseudo-dimension of the value function space is finite as in Proposition \ref{proposition:pseudo-dimension}, under the concentrability assumption and the same assumptions of Lemma \ref{Lemma:Single Iteration Error Bound}, let $\rho$ be an arbitrary state distribution, $C_{\rho,\mu}$ be the discounted-average concentrability coefficient and $\pi_{K}$ be the greedy policy w.r.t. $V_K$. Let $V^{\pi_{K}}$ and $V^{*}$ be the expected return of executing $\pi_{K}$ and the optimal policy $\pi^{*}$ in real environment, respectively. Define $N_{\rm{real}} = N \cdot |\mathcal{A}|\cdot \beta$ as the expected number of real samples. Then the following bound holds w.p. at least $1-\delta$:
\begin{equation}
\begin{aligned}
    \|V^{*}-V^{\pi_{K}}\|_{p, \rho} \leq & \frac{2 \gamma}{(1-\gamma)^{2}} C_{\rho, \mu}^{1 / p} d_{p, \mu}(B \mathcal{F}, \mathcal{F})
+ O\left(\Big(\frac{\beta |\mathcal{A}|}{N_{\rm{real}}}\big(\log (\frac{N_{\rm{real}}}{\beta |\mathcal{A}|})+\log (\frac{K}{\delta})\big)\Big)^{\frac{1}{2p}}\right) 
\\&+O\Big(\Phi^{-1}\big(1-\frac{\beta\delta}{8KN_{\rm{real}}(1-\beta)}\big) \sigma\Big)
+O\left(\gamma^{K/p} V_{\max }\right). 
\end{aligned}    
\end{equation}

\end{theorem}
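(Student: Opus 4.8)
The plan is to adapt the error-propagation argument of \citet{munos2008finite} to the $\beta$-mixture setting, the only place the mixture (hence the model error) enters being the single-iteration estimate of Lemma~\ref{Lemma:Single Iteration Error Bound}. First I would control the per-iteration error $\varepsilon_k\defeq V_{k+1}-BV_k$ for $k=0,\dots,K-1$ by invoking Lemma~\ref{Lemma:Single Iteration Error Bound} with $\delta_0=\delta/K$, so that a union bound over the $K$ iterations keeps the total failure probability at most $\delta$. To make the estimate explicit in $N$, I would invert the sample-size condition of the lemma: solving $N>128(8V_{\max}/\epsilon_0)^{2p}\big(\log(1/\delta_0)+\log(32\mathcal{N}_0(N))\big)$ for $\epsilon_0$ and bounding the covering number through the pseudo-dimension estimate of Proposition~\ref{proposition:pseudo-dimension} (which, with the finite pseudo-dimension treated as a constant, yields $\log\mathcal{N}_0(N)=O(\log N)$) shows that the admissible statistical error is $\epsilon_0^{\mathrm{stat}}=O\big((\tfrac{1}{N}(\log N+\log(K/\delta)))^{1/2p}\big)$, while the lemma's noise condition $\sigma<\epsilon_0/\big(4\gamma K_V\Phi^{-1}(1-\delta_0/(8N|\mathcal{A}|(1-\beta)))\big)$, for a given $\sigma$, forces an additional term $\epsilon_0^{\mathrm{model}}=O\big(\gamma K_V\Phi^{-1}(1-\delta_0/(8N|\mathcal{A}|(1-\beta)))\,\sigma\big)$. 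Substituting $N=N_{\mathrm{real}}/(\beta|\mathcal{A}|)$ and $\delta_0=\delta/K$ turns these two quantities into exactly the second and third $O(\cdot)$ terms of the claim, and on the resulting good event $\max_{0\le k<K}\|\varepsilon_k\|_{p,\mu}\le d_{p,\mu}(B\FF,\FF)+\epsilon_0^{\mathrm{stat}}+\epsilon_0^{\mathrm{model}}$.

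Next I would carry out the error propagation as in \citet{munos2008finite}. Writing $T$ for the Bellman optimality operator and $P^{\pi}$ for the transition kernel of a policy $\pi$ in the true (deterministic) environment, the approximate iteration gives $V^{*}-V_{k+1}=(TV^{*}-TV_k)-\varepsilon_k$, and since $\pi^{*}$ and $\pi_k$ are greedy for $V^{*}$ and $V_k$ respectively one has the two-sided pointwise relations $\gamma P^{\pi_k}(V^{*}-V_k)-\varepsilon_k\le V^{*}-V_{k+1}\le \gamma P^{\pi^{*}}(V^{*}-V_k)-\varepsilon_k$. Unrolling these from $V_0$ to $V_K$ and then applying the standard greedy-policy suboptimality bound for $\pi_K$ expresses $V^{*}-V^{\pi_K}$ as a $\gamma$-discounted combination of the $\varepsilon_k$ together with a $\gamma^{K}$-weighted initialization term in $V^{*}-V_0$. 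Taking $\|\cdot\|_{p,\rho}$, pushing the norm through the stochastic kernels with Jensen's inequality, and changing measure from $\rho$ to the sampling distribution $\mu$ via the discounted-average concentrability coefficient $C_{\rho,\mu}$, the $L^{p}$ bookkeeping (using $\|V^{*}-V_0\|_{\infty}\le 2V_{\max}$, which makes the $\gamma^{K}$-weighted term of order $\gamma^{K/p}V_{\max}$) yields
\[
\|V^{*}-V^{\pi_{K}}\|_{p,\rho}\le \frac{2\gamma}{(1-\gamma)^{2}}\,C_{\rho,\mu}^{1/p}\max_{0\le k<K}\|\varepsilon_k\|_{p,\mu}+O\big(\gamma^{K/p}V_{\max}\big).
\]

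Finally I would combine the two pieces: substituting $\max_k\|\varepsilon_k\|_{p,\mu}\le d_{p,\mu}(B\FF,\FF)+\epsilon_0^{\mathrm{stat}}+\epsilon_0^{\mathrm{model}}$ into the displayed propagation inequality, the inherent-Bellman-error part reproduces $\frac{2\gamma}{(1-\gamma)^{2}}C_{\rho,\mu}^{1/p}d_{p,\mu}(B\FF,\FF)$ verbatim, the statistical part $\epsilon_0^{\mathrm{stat}}$ and the model part $\epsilon_0^{\mathrm{model}}$ (with the $\gamma$, $(1-\gamma)^{-2}$, $C_{\rho,\mu}^{1/p}$ and $K_V$ prefactors absorbed into the $O$-notation) give the remaining two terms, and the initialization contributes $O(\gamma^{K/p}V_{\max})$, which together are exactly the stated bound. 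I expect the main obstacle to be the propagation step: tracking the composition of the stochastic kernels, the introduction of $C_{\rho,\mu}$, and the $(1-\gamma)^{-2}$ and $\gamma^{K/p}$ factors in the $L^{p}(\rho)$ norm needs care, although it essentially follows \citet{munos2008finite}; the genuinely new content is only verifying that the $\beta$-mixture/model-error contribution stays confined to the single-iteration term and does not otherwise disturb the recursion.
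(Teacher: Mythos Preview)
Your proposal is correct and follows essentially the same route as the paper: invoke Lemma~\ref{Lemma:Single Iteration Error Bound} with $\delta_0=\delta/K$, invert its two conditions (using Proposition~\ref{proposition:pseudo-dimension} to turn the covering number into a $\log N$ term) to isolate the statistical and model-error contributions, substitute $N=N_{\mathrm{real}}/(\beta|\mathcal{A}|)$, and feed the resulting per-iteration bound into the Munos--Szepesv\'ari error-propagation inequality to pick up the $\tfrac{2\gamma}{(1-\gamma)^2}C_{\rho,\mu}^{1/p}$ prefactor and the $O(\gamma^{K/p}V_{\max})$ initialization term. The only cosmetic difference is that the paper quotes the propagation step as a black-box lemma (Theorem~2 of \citet{munos2008finite}) rather than sketching the unrolling and change-of-measure argument as you do, but the content is identical.
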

\begin{proof}
See \emph{Appendix \ref{appendix:proofs}}, \emph{Theorem \ref{appendix theorem: return bound}}. 
\end{proof}


\textbf{Remark.} The definition of the concentrability assumption is provided in Appendix \ref{appendix:proofs}. Theorem \ref{theorem: return bound} gives an upper bound on the return discrepancy. To achieve small return discrepancy, we hope to minimize the upper bound.
For the first term in the bound, the approximation error $d_{p, \mu}(B \mathcal{F}, \mathcal{F})$ can be made small by selecting $\mathcal{F}$ to be large enough, and the last term arises due to the finite number of iterations. For the remaining two terms, as $\beta / N_{\rm{real}}$ increases, the second term increases while the third term decreases. Therefore, there exists an optimal value for $\beta / N_{\rm{real}}$ to trade off these two terms. In online MBRL, $N_{\rm{real}}$ increases as real samples are continuously collected from the environment throughout the training process, so that gradually increasing the real ratio $\beta$ is promising to achieve good performance according to the upper bound.

From another perspective, the second term corresponds to the estimation error of using limited data to learn the value functions. By utilizing a learned model, we can generate as much data as we want, which, however, tends to suffer from the model error represented by the third term. The hyperparameter $\beta$ serves as a trade-off between these two kinds of errors. Intuitively, with more real data available, the pressure of limited data is alleviated, which means we can increase $\beta$ accordingly to reduce the reliance on inaccurate models.

\section{The AutoMBPO Framework}
The analysis in the previous section shows that the optimal real ratio $\beta$ during training should gradually increase according to the current optimization states, such as the number of real samples and model error. These states may also be affected by other hyperparameters, e.g., model training frequency and rollout length, which makes the hyperparameter scheduling non-trivial in practice. In order to investigate the optimal joint scheduling of these hyperparameters, we propose a practical framework named AutoMBPO to schedule these hyperparameters automatically when training an MBRL algorithm. The high-level idea of AutoMBPO is to formulate hyperparameter scheduling as a sequential decision-making problem and then adopt some RL algorithm to solve it. Here we choose MBPO \cite{mbpo} as a representative running case since MBPO has achieved remarkable success in continuous control benchmark tasks and can naturally take all the hyperparameters we care about into consideration (a comparison of the hyperparameters that can be scheduled in different MBRL algorithms is provided in Appendix \ref{appendix:algo comparison}). We illustrate the overview framework of AutoMBPO in Figure \ref{fig:framework} and detail the algorithm in Algorithm~\ref{alg}.

\subsection{Hyper-MDP Formulation}
\label{sec:Hyper-MDP Formulation}
We formulate the problem of determining the hyperparameter schedule as a new sequential decision-making problem, which we will call \emph{hyper-MDP}. In the hyper-MDP, the hyper-controller needs to decide the hyperparameter values throughout one complete run of the MBPO learning process. Below we elaborate on several critical aspects of the hyper-MDP formulation used in this paper.

\textbf{Episode.} An episode of the hyper-MDP consists of the whole process of training an MBPO instance from scratch, which contains multiple episodes of the target-MDP. We will terminate an episode of the hyper-MDP if $N$ samples have been collected in the target-MDP. To be more specific, if the episode length of the target-MDP is $H$, and we only train MBPO for $m$ episodes, then we have $N=H\cdot m$.

\begin{wrapfigure}{r}{0.61\textwidth}
    \vspace{-7pt}
    \begin{center}
    \includegraphics[width = 0.61\textwidth]{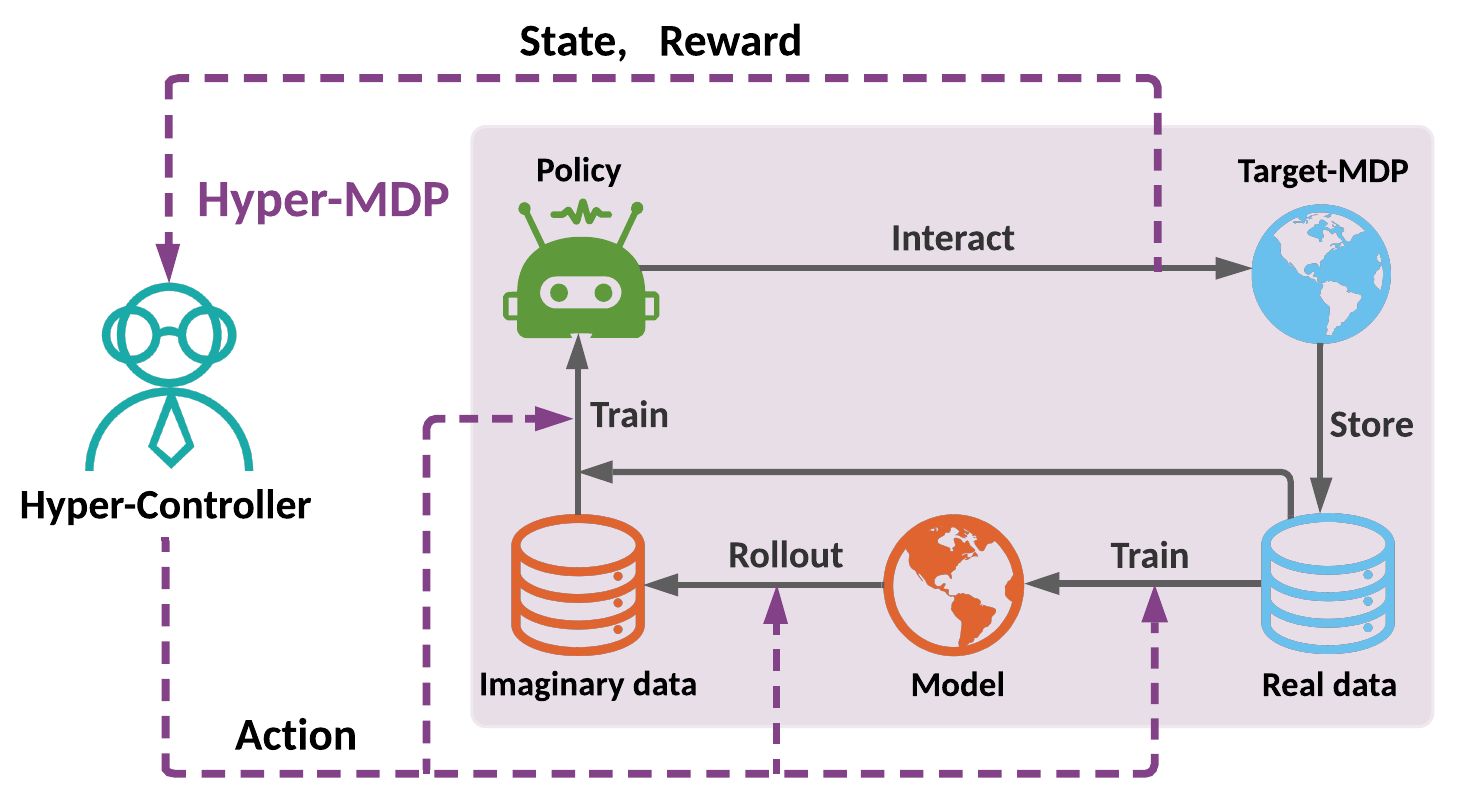}
    \end{center}
    \vspace{-7pt}
    \caption{Illustration of AutoMBPO framework. The hyper-controller (left) treats the whole learning process of MBPO (right) as its hyper-MDP and takes action according to the state containing all the information of MBPO training. The action is to adjust the hyperparameters of MBPO, e.g., real ratio, model training frequency, policy training iteration, and rollout length. Afterward, a reward is given to the hyper-controller based on the performance of MBPO policy in target-MDP.}
    \vspace{-10pt}
    \label{fig:framework}
\end{wrapfigure}

\textbf{State.} The state should be defined to include all the information of MBPO training to help hyper-controller make decisions accordingly. We use a concatenation of several features to represent the state. As suggested by Theorem \ref{theorem: return bound}, the state should contain $N_{\rm{real}}$, the number of real samples so far, and the model error, which can be estimated by the model loss $\mathcal{L}_{\hat{T}}$ on a validation set. Besides, we also add other features to better capture the current optimization situation: (1) critic loss: $\mathcal{L}_{Q}$, which to some extent represents how well the value function fits the training data; (2) policy change: $\epsilon_{\pi} = E_{(s,a) \sim \pi_{D}}[|\pi(a|s)- \pi_{D}(a|s)|]$, which evaluates the distance of current policy $\pi$ and the data-collecting policy $\pi_{D}$ on sampled data. In practice, we calculate $\epsilon_{\pi}$ using recently collected data to reflect the updating speed of policy roughly;
(3) validation metrics: we use the average return on the target-MDP to represent the performance of the policy;
(4) current hyperparameters: the hyperparameter values used currently by MBPO. All these features are normalized into $[0,1]$ to make the learning process stable.

\textbf{Action.} 
The action set in the hyper-MDP should contain all the hyperparameters we care about. Firstly, we would like to adjust the ratio of real data when optimizing the policy. Secondly, we also consider investigating the model training and policy learning frequency to balance their alternate optimization \cite{rajeswaran2020game}. Lastly, the rollout length when generating imaginary rollouts is crucial in MBPO, but it remains unclear how to choose its value \cite{mbpo}. Besides, since there exists an optimal value for $\beta / N_\text{real}$ according to the theoretical analysis and $N_\text{real}$ gradually increases during training in practice, for the action space, adjusting the real ratio by multiplying a constant is enough to get an effective schedule.
To sum up, the action set and the corresponding space of each action are: (1) adjusting the real ratio $\beta$: $\{\times c^{-1}, \times 1 , \times c\}$, where c is a constant larger than $1$; (2) deciding whether to train the model: $\{0,1\}$. Note that in MBPO, the model training iterations are implicitly decided by an early stopping trick, so we only decide whether to train the model; (3) adjusting the policy training iteration $G$: $\{-1, 0, +1\}$; (4) adjusting the model rollout length $k$: $\{-1, 0, +1\}$. In practice, the actions are taken every $\tau$ real samples are collected where $\tau$ is the hyperparameter of the hyper-controller.

\textbf{Reward.} One direct way is to define a reward at the end of an episode in hyper-MDP, which, however, will lead to the sparse reward problem. We define our reward as follows: every $H$ real samples, i.e., one episode in target-MDP, we evaluate the MBPO policy on target-MDP and define the reward as the average evaluated return.

\begin{algorithm2e*}[t]
\label{alg}
\caption{AutoMBPO}
\LinesNumbered
\DontPrintSemicolon
\label{alg:AutoMBPO}
Initialize a hyper-controller $\textbf{MC}$\;
\Repeat{acceptable performance is achieved}{
      Initialize a MBPO instance; initialize the hyper-controller sample count $i = 0$\;
      \For{$m$ target-MDP episodes}{
\For{$h$ = $0:H-1$ real timesteps (an episode in target-MDP)}{
Interact with the target-MDP using the MBPO policy; add the real samples to $\mathcal{D}_{\mathrm{env}}$\;
\If{$h$ \text{mod} $\tau=0$}{
Calculate the current state $\bold{S}_i$ for hyper-MDP; take an action based on $\bold{S}_i$: $\bold{A}_i \gets \textbf{MC}(\bold{S}_i)$\;
Decide whether to train the model, and adjust the hyperparameters $k,G,\beta$ according to $\bold{A}_i$\;
}
\For{F model rollouts}{
Sample $s_{t}$ uniformly from $\mathcal{D}_{\mathrm{env}}$; perform $k$-step rollout from $s_t$; add the imaginary data to $\mathcal{D}_{\mathrm{model}}$\;
}
\For{$G$ gradient updates}{
Train the MBPO policy using SAC on $\mathcal{D}_{\mathrm{env}}$ and $\mathcal{D}_{\mathrm{model}}$ in the ratio $\beta$\;
}
\uIf{$h = H-1$ }{
Evaluate MBPO and construct the reward as the average return: $\bold{R}_{i} = Avg(\eta)$; update $i = i+1$\;
}
\ElseIf{$h$ \text{mod} $\tau=0$ and $h \not= H-\tau$}{Construct the reward $\bold{R}_{i}=0$; update $i = i+1$}
}
}
Train the hyper-controller $\textbf{MC}$ on the data $\{(\bold{S}_i,\bold{A}_i,\bold{R}_i)\}_{i = 0}^{m \cdot H/\tau}$
    }
\end{algorithm2e*}

\subsection{Hyper-Controller Learning}
After formulating the hyper-MDP, we are now ready to learn a hyper-controller in the hyper-MDP. In fact, we can use any off-the-shelf reinforcement learning algorithm, and we choose Proximal Policy Optimization (PPO) \cite{schulman2017proximal} as the hyper-controller algorithm since PPO has achieved considerable success in the optimization schedule problem \cite{autoloss}. To be more specific, we maximize the following clipped PPO objective:
\begin{equation}
\nonumber
    \mathcal{J}(\Omega)=\hat{\mathbb{E}}_{t}\left[\min  \big(r_{t}(\Omega) \hat{A}_{t}, \operatorname{clip}\left(r_{t}(\Omega), 1-\epsilon, 1+\epsilon\right) \hat{A}_{t}\big)\right],
\end{equation}
where $r_t(\Omega) = \textbf{MC}_{\Omega}(a_{t} | s_{t})/\textbf{MC}_{\Omega_{\text{old}}}(a_{t} | s_{t})$ and $\textbf{MC}_\Omega$ is the hyper-controller policy parameterized by $\Omega$.

There are multiple choices of the advantage function \cite{gae}, and we use the baseline version of the Monte-Carlo returns to reduce the variance:
\begin{equation}
    \hat{A}_{t} = \Sigma_{i = t}^{m \cdot H / \tau} (R_i - R^{\prime}_{i}).
\end{equation}
Here, $R^{\prime}_{i}$ is the average return of MBPO policy trained in advance with the same amount of real data using the original hyperparameters, and we treat $\Sigma_{i = t}^{m \cdot H / \tau} R^{\prime}_{i}$ as the baseline. The advantage function can also be regarded as the performance improvement compared with the original MBPO configuration.

\section{Experiment}
\label{sec: exp}
In this section, we conduct experiments to verify the effectiveness of AutoMBPO and provide comprehensive empirical analysis. Our experiments aim to answer the following three questions: i) How does the MBPO instance learned by AutoMBPO perform compared to the original configuration?
ii) Do the learned hyper-controllers in different target-MDPs share similar schedules, and are the schedules consistent with previous theoretical analysis? 
iii) Which hyperparameter in MBPO is more important/sensitive to the final performance? More experimental results are provided in Appendix \ref{appendix:more experiments results}.

\begin{figure*}[htb]
	\centering
	\includegraphics[width=1\textwidth]{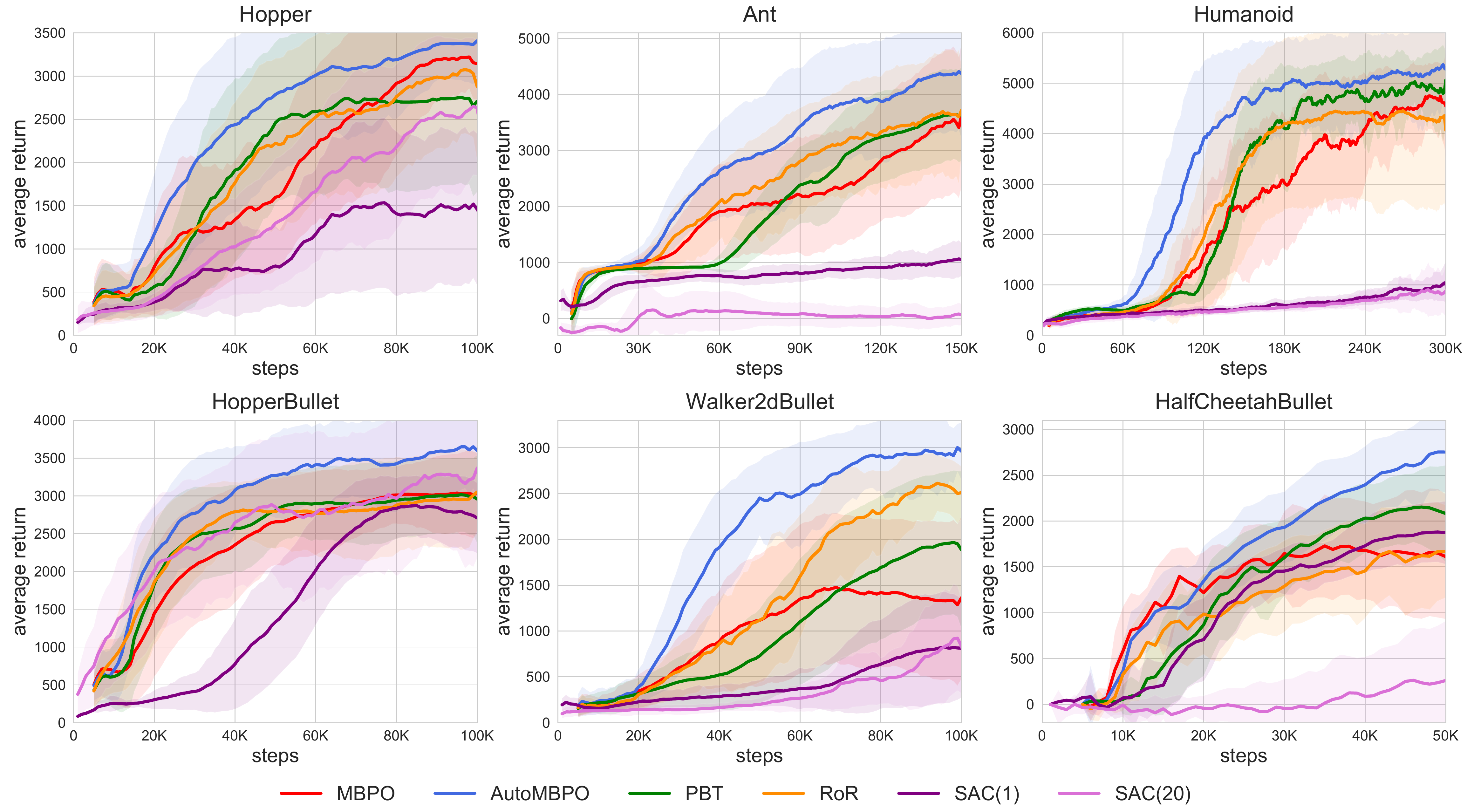}
	\vspace{-10pt}
	\caption{Learning curves of AutoMBPO and baselines on different continuous control environments. The solid lines indicate the mean, and shaded areas indicate the standard deviation of 10 trials over different random seeds. Each trial is evaluated every 1000 environment steps, where each evaluation reports the average return over 10 episodes.
	}
	
	\vspace{-5pt}
	\label{fig:comparison result}
\end{figure*}

\subsection{Comparative Evaluation}
\label{sec:compare}
We mainly compare the MBPO instance found by \textbf{AutoMBPO} to the \textbf{MBPO} with original hyperparameters \cite{mbpo} since AutoMBPO is directly based on MBPO, and there is no need to compare to other model-based methods. 
Besides, we also compare to Soft Actor-Critic (SAC) \cite{sac}, the state-of-the-art model-free algorithm in terms of sample efficiency and asymptotic performance. Actually, SAC can be viewed as an extreme variant of MBPO where the real ratio in training policy is 1. To make a fair comparison, we also provide the result of SAC where the policy training iteration is enforced to be the same as in MBPO. To be more specific, the original SAC, denoted as \textbf{SAC(1)}, trains the policy one time per step in the real environment, while in MBPO, the policy is trained 20 times per real step, so we add the baseline \textbf{SAC(20)}. We also compare our method to the \textbf{PBT} \cite{zhang2021importance} and \textbf{RoR} \cite{dong2020intelligent} algorithm for a comprehensive comparison. Note that the hyper-controllers in AutoMBPO and RoR are trained in advance and are only evaluated here. The whole training process of AutoMBPO is provided in Appendix \ref{appendix:effectiveness}.

The compared methods are evaluated on three MuJoCo \cite{todorov2012mujoco} (\textbf{Hopper, Ant, Humanoid}), and three PyBullet
\cite{coumans2016pybullet} (\textbf{HopperBullet, Walker2dBullet, HalcheetahBullet}) continuous control tasks as our target-MDPs\footnote{Code is available at: \href{https://github.com/hanglai/autombpo}{https://github.com/hanglai/autombpo}}. In practice, the three hyperparameters (real ratio, model training frequency, and policy training iteration) are scheduled on all tasks, while the rollout length is not scheduled on the PyBullet tasks since preliminary experiments showed that even 2-step rollouts degrade the performance on the PyBullet environments. Moreover, due to the high computational cost, the hyper-controller is only trained in previous $m$ episodes of MBPO and will be used to continuously control MBPO instances running for $M$ episodes, which is usually $2-3$ times larger than $m$ (e.g., $m = 50, M = 150$ on Ant). Furthermore, we give a penalty for the model training to speed up the training process, and we find it does not influence the results much. More experimental details and hyperparameter configuration for hyper-controller can be found in Appendix \ref{appendix:experimental settings} and \ref{appendix:hyperparameters}.

The comparison results are shown in Figure \ref{fig:comparison result}. We can observe that: i) the MBPO instances learned by AutoMBPO significantly outperform MBPO with original configuration in all tasks, highlighting the strength of automatic scheduling. ii) Compared to other hyperparameter optimization baselines (i.e., PBT and RoR), our approach AutoMBPO achieves better performance and generalization in different environments, verifying the effectiveness of our proposed framework. iii) MFRL methods with exhaustive policy training may overfit to a small amount of real data since SAC(20) achieves worse performance than SAC(1) on some complex tasks like Ant and Walker2dBullet. The overfitting problem is much less severe in MBPO since we can generate plentiful data using the model. iv) Though only trained in previous episodes, the hyper-controller can still find reasonable hyperparameter schedules and can be extended to the longer training process of MBPO.

\subsection{Hyperparameter Schedule Visualization}
\label{sec:hyperparameter visualization}
We visualize the schedules of the hyperparameters found by AutoMBPO in Figure \ref{fig:hyperparameter}. From the results, the hyperparameter schedules on different tasks show both similarities and differences: i) A similar increasing schedule of the real ratio can be observed in all six environments, which is consistent with our derivation in Section \ref{sec:theory} and the empirical results in \citet{kalweit2017uncertainty}. ii) The hyper-controller tends to train the model more frequently in complex environments, such as Ant and Humanoid.
iii) To a certain extent, increasing the policy training iteration can better exploit the model. However, it may also increase the risk of instability \cite{rajeswaran2020game}, and we observe that the hyper-controller decreases the policy training iteration in the subsequent training phase of Hopper when the policy is close to convergence. iv) As for the rollout length, the hyper-controller adopts a nearly linear increasing schedule, which is close to the manually designed one used in the original MBPO \cite{mbpo}.

\begin{figure*}[htb]
	\centering
 	\vspace{-5pt}
	\includegraphics[width=1\textwidth]{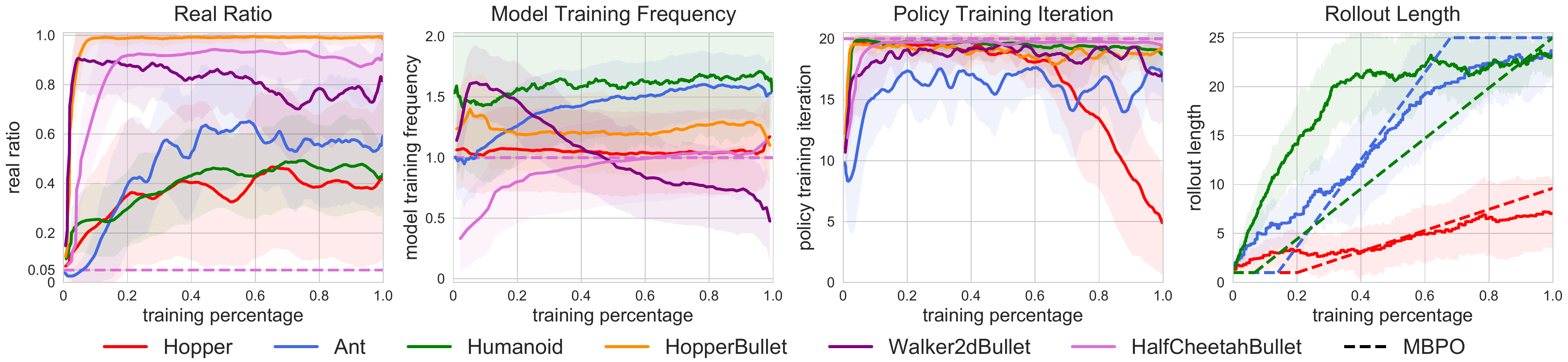}
	\vspace{-10pt}
	\caption{Hyperparameter schedule found by AutoMBPO on different tasks. The dashed lines indicate the original hyperparameter configuration in MBPO. For better visualization, model training frequency is scaled by the frequency of the original MBPO, and the training process (x-axis) is scaled to [0,1] since the max episodes vary on different tasks.}
	
 	\vspace{-5pt}
	\label{fig:hyperparameter}
\end{figure*}

\subsection{Importance of Hyperparameters}
\label{sec:Importance of Hyperparameters}

In this section, we investigate the importance of different hyperparameters. Specifically, we utilize the hyper-controller trained previously to schedule one certain hyperparameter respectively, i.e., real ratio (AutoMBPO-R), model training frequency (AutoMBPO-M), policy training iteration (AutoMBPO-P), and rollout length (AutoMBPO-L), while fixing other hyperparameters to the same as in the original MBPO. As shown in Figure \ref{fig:importance}, the importance of various hyperparameters changes across different tasks: using the hyper-controller to schedule the policy training iteration is helpful on Hopper but fails on Ant, while the situation reverses for model training frequency. However, using the hyper-controller to schedule the real ratio retains much of the benefit of AutoMBPO on almost all tasks, which further verifies the importance of real ratio.

\begin{figure*}[htb]
	\centering
	\vspace{-5pt}
	\includegraphics[width=1\textwidth]{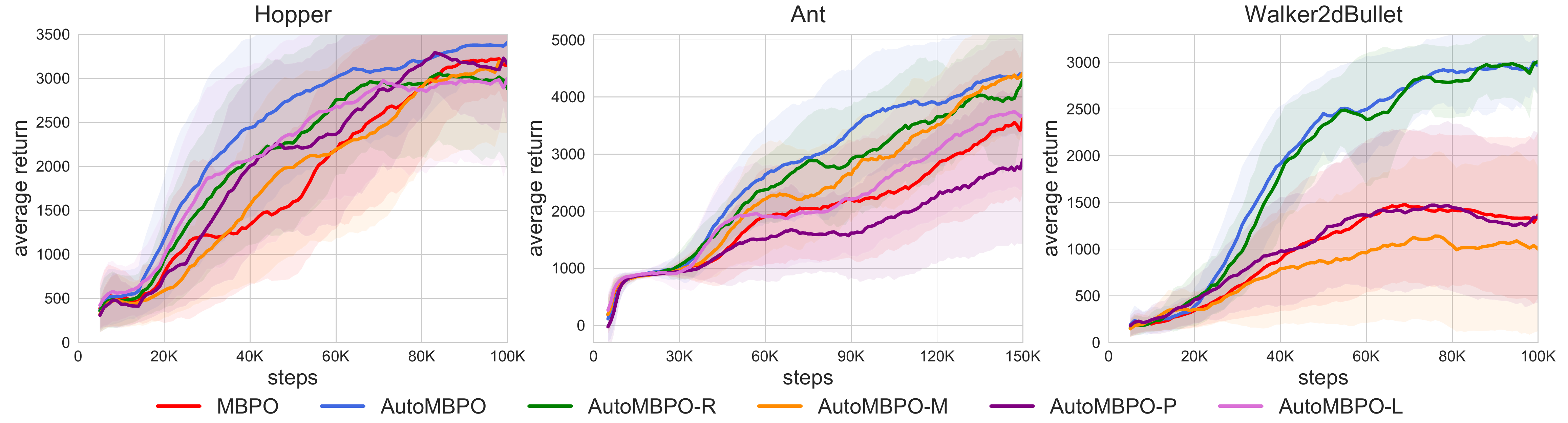}
	\vspace{-10pt}
	\caption{Performance of the instances with only one specific hyperparameter scheduled by hyper-controller while other hyperparameters fixed to the same as in MBPO. The complete figures of all six environments are provided in Appendix \ref{sec:full figure}.}
	
 	\vspace{-5pt}
	\label{fig:importance}
\end{figure*}

\section{Conclusion}
In this paper, we conduct a theoretical analysis to justify real data usage for policy optimization in MBRL algorithms. Inspired by the analysis, we present a framework called AutoMBPO to automatically schedule the hyperparameters in the MBPO algorithm. Extensive experiments on complex continuous control tasks demonstrate that the derived MBPO instance with hyperparameters scheduled by AutoMBPO can achieve significantly better performance than the MBPO instance of the original configuration. Moreover, the real ratio schedule discovered by AutoMBPO is consistent with our theoretical analysis. We hope this insight can help design more effective MBRL algorithms. For future work, we plan to apply our hyper-controller training scheme to other MBRL methods.

\section*{Acknowledgments}
The SJTU team is supported by ``New Generation of AI 2030'' Major Project (2018AAA0100900), Shanghai Municipal Science and Technology Major Project (2021SHZDZX0102) and National Natural Science Foundation of China (62076161, 61772333, 61632017). The work is also sponsored by Huawei Innovation Research Program.

\bibliographystyle{plainnat}
\bibliography{reference}

\newpage
\appendix

\newcommand{\tabincell}[2]{\begin{tabular}{@{}#1@{}}#2\end{tabular}}

\section*{\centering Appendix for: On Effective Scheduling of \\ Model-based Reinforcement Learning}
\section{Proofs}
\label{appendix:proofs}
\newtheorem{assumption}{Assumption}[section]
\begin{assumption}
\label{assumption:concentrability}
(Discounted-average concentrability of future-state distributions) Given $\rho,\mu,m \geq 1$, and an arbitrary sequence of stationary policies $\{\pi_m\}_{m \geq 1}$, assume that the future-state distribution $\rho T^{\pi_{1}} T^{\pi_{2}} \ldots T^{\pi_{m}}$ is absolutely continuous w.r.t. $\mu$. Assume that
\begin{equation}
    \nonumber
    c(m) \stackrel{\rm def}{=} \sup _{\pi_{1}, \ldots, \pi_{m}}\left\|\frac{d\left(\rho T^{\pi_{1}} T^{\pi_{2}} \ldots T^{\pi_{m}}\right)}{d \mu}\right\|_{\infty}
\end{equation}
satisfies 
\begin{equation}
    \nonumber
    C_{\rho, \mu} \stackrel{\rm def}{=}(1-\gamma)^{2} \sum_{m \geq 1} m \gamma^{m-1} c(m)<+\infty.
\end{equation}
\end{assumption}
We call $c(m)$ the \emph{$m$-step concentrability} of a future-state distribution and call $C_{\rho,\mu}$ the \emph{discounted-average concentrability coefficient} of the future-state distributions. The class of MDPs that satisfies this concentrability assumption is quite large, which is further discussed in \citet{munos2008finite}.

\begin{lemma}
\label{lemma:Pollard}
\textbf{(Pollard, 1984)} Let $\mathcal{F}$ be a set of measurable functions $f:\mathcal{X} \rightarrow [0,K]$ and let $\epsilon > 0$, $N$ be arbitrary. If $X_i$, $i = 1,\ldots,N$ is an i.i.d. sequence taking values in the space $\mathcal{X}$ then 
\begin{equation}
    \nonumber
    \mathbb{P}\left(\sup _{f \in \mathcal{F}}\left|\frac{1}{N} \sum_{i=1}^{N} f\left(X_{i}\right)-\mathbb{E}\left[f\left(X_{1}\right)\right]\right|>\epsilon\right) \leq 8 \mathbb{E}\left[\mathcal{N}\left(\epsilon / 8, \mathcal{F}\left(X^{1: N}\right)\right)\right] e^{-\frac{N \epsilon^{2}}{128 K^{2}}},
\end{equation}
where $\mathcal{N}_q \left(\epsilon, \mathcal{F}\left(X^{1: N}\right)\right)$ denotes the ($\epsilon$,q)-covering number of the set $\mathcal{F}(X^{1:N}) = \left\{\left(f\left(X_{1}\right), \ldots, f\left(X_{N}\right)\right) \mid f \in \mathcal{F}\right\}$, i.e., the smallest integer $m$ such that $\mathcal{F}(X^{1:N})$ can be covered by $m$ balls of the normed-space $\left(\mathbb{R}^{N},\|\cdot\|_{q}\right)$ with centers in $\mathcal{F}(X^{1:N})$ and radius $N^{1/q}\epsilon$. When $X^{1:N}$ are i.i.d with the distribution $\mu$, we denote $\mathbb{E}\left[\mathcal{N}_{q}\left(\epsilon, \mathcal{F}\left(X^{1: N}\right)\right)\right]$ as $\mathcal{N}_{q}(\epsilon, \mathcal{F}, N, \mu)$. 
And when $q=1$, $\mathcal{N}$ is used instead of $\mathcal{N}_{1}$.
\end{lemma}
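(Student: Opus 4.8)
The plan is to recognize this as the classical uniform-deviation (Glivenko--Cantelli-type) inequality of empirical process theory and to prove it by the standard four-step recipe: symmetrization by a ghost sample, randomization by Rademacher signs, discretization through the covering number, and a finite union bound combined with Hoeffding's inequality. Throughout I write $Pf = \mathbb{E}[f(X_1)]$ and $P_N f = \frac{1}{N}\sum_{i=1}^{N} f(X_i)$, and I introduce an independent ghost copy $X_1',\dots,X_N'$ of the sample with empirical average $P_N' f = \frac{1}{N}\sum_{i=1}^{N} f(X_i')$. The whole point is that the supremum over the (possibly infinite) class $\mathcal{F}$ depends on $f$ only through its values on the finitely many observed points, so the covering number $\mathcal{N}(\cdot,\mathcal{F}(X^{1:N}))$ is exactly what tames the supremum.

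First I would establish the symmetrization inequality $\mathbb{P}\big(\sup_f |P_N f - Pf| > \epsilon\big) \le 2\,\mathbb{P}\big(\sup_f |P_N f - P_N' f| > \epsilon/2\big)$. On the event that the left-hand supremum is exceeded, select (measurably) an $f^\ast$ with $|P_N f^\ast - P f^\ast| > \epsilon$; by Chebyshev's inequality the ghost average $P_N' f^\ast$ lies within $\epsilon/2$ of $Pf^\ast$ with probability at least $1/2$ whenever $N\epsilon^2 \ge 8\,\mathrm{Var}(f^\ast)$, which holds once $N$ is a modest multiple of $K^2/\epsilon^2$ since $f^\ast \in [0,K]$. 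The triangle inequality then transfers the deviation to the two-sample statistic, yielding the factor $2$. For the complementary small-$N$ regime there is nothing to prove, because the advertised right-hand side already exceeds $1$ and the bound is vacuous; this is how the classical statement avoids carrying an explicit lower bound on $N$.

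Next I would symmetrize by signs. Since $X_i$ and $X_i'$ are i.i.d., each difference $f(X_i)-f(X_i')$ is symmetric about $0$, so attaching independent Rademacher variables $\sigma_i\in\{-1,+1\}$ leaves the joint law invariant:
\[
\mathbb{P}\Big(\sup_{f}\big|P_N f - P_N' f\big| > \tfrac{\epsilon}{2}\Big)
= \mathbb{P}\Big(\sup_{f}\Big|\tfrac{1}{N}\sum_{i=1}^{N}\sigma_i\big(f(X_i)-f(X_i')\big)\Big| > \tfrac{\epsilon}{2}\Big).
\]
I would then condition on the pooled data $(X^{1:N},{X'}^{1:N})$, so that $f$ enters only through the observed values. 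Covering $\mathcal{F}$ restricted to these points by an $\epsilon/8$-net in the normalized $\ell^1$ metric and replacing each $f$ by its nearest net element changes the randomized average by at most $\epsilon/8$ on each of the two samples; hence the supremum over $\mathcal{F}$ at threshold $\epsilon/2$ is controlled by the maximum over the finite net at the reduced threshold, the net cardinality being bounded by the covering number which, after taking the data-expectation and using exchangeability, is captured by $\mathbb{E}\big[\mathcal{N}(\epsilon/8,\mathcal{F}(X^{1:N}))\big]$.

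Finally, for each fixed net element the conditional randomized sum is an average of independent, mean-zero terms each ranging over an interval of length at most $2K$, so Hoeffding's inequality gives a conditional two-sided tail of the form $2\exp(-cN\epsilon^2/K^2)$. A union bound over the net followed by taking expectation over the data converts the cardinality into $\mathbb{E}[\mathcal{N}(\epsilon/8,\mathcal{F}(X^{1:N}))]$, and assembling the factor from symmetrization with the two-sided Hoeffding estimate produces the constant $8$. The main obstacle is precisely this constant bookkeeping: the classical proof is not tight, so I expect the real work to lie in calibrating the symmetrization peeling, the net radius $\epsilon/8$, and the Hoeffding threshold so that the accumulated factors collapse to exactly $8$ and $\exp\!\big(-N\epsilon^2/(128K^2)\big)$ rather than to some unspecified multiple, and in handling the passage from the pooled $2N$-point cover back to the $N$-point covering number $\mathcal{N}(\epsilon/8,\mathcal{F}(X^{1:N}))$ stated in the lemma.
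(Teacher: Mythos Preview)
The paper does not prove this lemma at all: it is quoted verbatim as a classical result of Pollard (1984) and used as a black box in the proof of Lemma~\ref{appendix Lemma:Single Iteration Error Bound}. There is therefore no ``paper's own proof'' to compare against.

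That said, your outline is the standard textbook argument for this inequality (ghost-sample symmetrization, Rademacher randomization, $\ell^1$-covering, Hoeffding plus union bound), and it is essentially correct. Your own caveats at the end are accurate: the honest difficulties are purely in constant-tracking---calibrating the symmetrization threshold, the $\epsilon/8$ net radius, and the Hoeffding scale so that the factors land exactly on $8$ and $128$---and in the passage from the pooled $2N$-point cover back to the $N$-point covering number that appears in the statement. None of this is a conceptual gap; it is bookkeeping that Pollard's original proof (or the treatment in Gy\"orfi et~al.\ or Devroye--Lugosi) carries out explicitly, and you would simply need to follow one of those sources line by line to pin down the constants.
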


\newtheorem{proposition}{Proposition}[section]
\begin{proposition}
\label{proposition:pseudo-dimension}
\textbf{(Haussler $(1995)$, Corollary 3$)$} For any set $X$, any points $x^{1: N} \in X^{N}$, any class $\mathcal{F}$ of functions on $\mathcal{X}$ taking values in $[0, L]$ with pseudo-dimension $V_{\mathcal{F}^{+}}<\infty$, and any $\epsilon>0$,
$$
\mathcal{N}\left(\epsilon, \mathcal{F}\left(x^{1: N}\right)\right) \leq e\left(V_{\mathcal{F}+}+1\right)\left(\frac{2 e L}{\epsilon}\right)^{V_{\mathcal{F}+}} .
$$
\end{proposition}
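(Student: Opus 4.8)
The plan is to recognize this as the classical covering-number bound of Haussler and to reconstruct its proof by controlling the $\epsilon$-\emph{packing} number of the restricted class $\mathcal{F}(x^{1:N}) = \{(f(x_1),\dots,f(x_N)) : f \in \mathcal{F}\} \subseteq [0,L]^N$ in the normalized empirical $\ell_1$ metric $\rho(u,v) = \frac{1}{N}\sum_{i=1}^N |u_i - v_i|$. Since any maximal $\epsilon$-packing is automatically an $\epsilon$-cover, we have $\mathcal{N}(\epsilon, \mathcal{F}(x^{1:N})) \le M(\epsilon, \mathcal{F}(x^{1:N}))$, where $M$ denotes the packing number, so it suffices to bound the largest family $f_1,\dots,f_m \in \mathcal{F}$ whose sample vectors are pairwise $\epsilon$-separated under $\rho$.

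First I would discretize the range. Rounding each value $f(x_i)$ to the nearest point of the grid $\{0,\eta,2\eta,\dots\}$ with spacing $\eta$ proportional to $\epsilon$ perturbs $\rho$ by at most $\eta$, so an $\epsilon$-separated family stays $\Theta(\epsilon)$-separated after rounding. This reduces the problem to bounding packings of a subset $V \subseteq \{0,1,\dots,k\}^N$ with $k = \lceil L/\eta\rceil \approx L/\epsilon$. The key structural observation is that rounding does not increase the pseudo-dimension: any set of points pseudo-shattered by the rounded functions is already pseudo-shattered by the originals (the witnessing thresholds may be taken on the grid), so $V$ inherits the bound $d := V_{\mathcal{F}^+}$ on its pseudo-dimension.

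The heart of the argument, and the main obstacle, is the sharp combinatorial sphere-packing bound for subsets of the discrete cube with bounded pseudo-dimension. Two ingredients drive it. First, a Sauer--Shelah type estimate uses pseudo-dimension $\le d$ to bound the number of distinct traces of $V$ on any coordinate subset polynomially of degree $d$. Second, Haussler's density lemma for the one-inclusion graph on $V$ --- the graph whose edges join cube-elements differing in a single coordinate by one level --- states that this graph has at most $d|V|$ edges, hence average degree at most $2d$. From these, an averaging argument over random coordinate restrictions shows that an $\epsilon$-separated family cannot be large: each separated pair must disagree on at least an $\epsilon/L$ fraction of coordinates, while the edge-density bound forces the packing to grow no faster than $(2eL/\epsilon)^{d}$, the explicit prefactor $e(d+1)$ arising from tracking the constants in the density lemma. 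Extracting the tight constant here, rather than a cruder polynomial-in-$N$ Sauer--Shelah bound, is exactly what makes this step delicate and constitutes the technical core of Haussler (1995).

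Finally I would translate the combinatorial bound back: the packing bound on $V \subseteq \{0,\dots,k\}^N$ yields the same bound (the $\Theta(\epsilon)$ rescaling being absorbed into the constants) for the $\epsilon$-packing of $\mathcal{F}(x^{1:N})$ in the normalized $\ell_1$ metric, and the inequality $\mathcal{N} \le M$ then delivers $\mathcal{N}(\epsilon, \mathcal{F}(x^{1:N})) \le e(V_{\mathcal{F}^+}+1)(2eL/\epsilon)^{V_{\mathcal{F}^+}}$, as claimed. Since the statement is quoted verbatim from Haussler's corollary, in the paper itself this step is discharged simply by citing that result, but the sketch above indicates the combinatorial machinery underlying it.
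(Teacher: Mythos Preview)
Your proposal is correct, and you have already anticipated the comparison: the paper does not prove this proposition at all but simply states it as a quoted corollary of Haussler (1995) and invokes it later when bounding $\mathcal{N}_0(N)$ in the proof of Theorem~\ref{appendix theorem: return bound}. Your sketch of the underlying packing--covering reduction, discretization, and one-inclusion-graph density argument is a faithful outline of Haussler's original proof, so there is no discrepancy---you have supplied more than the paper itself does.
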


\begin{lemma}
\label{appendix Lemma:Single Iteration Error Bound}
(Single Iteration Error Bound) Let $V_k$ and $V_{k+1}$ be the value functions of iteration $k$ and $k+1$, and $V_{max} = r_{max}/(1-\gamma)$. For $p \geq 1$ and a certain state distribution $\mu$, let the inherent Bellman error of the value function space $\mathcal{F}$ be defined by $d_{p, \mu}(B \mathcal{F}, \mathcal{F})=\sup_{V \in \mathcal{F}} \inf _{f \in \mathcal{F}}\|f-B V\|_{p, \mu}$. Assume the value functions are $K_{V}$-$Lipschitz$ continuous, i.e., for any k and states pair $(s_i, s_j)$, it holds that $|V_k(s_i)-V_k(s_j)| \leq K_{V} \|s_i-s_j \|_2$. And assume the $L^2$-norm model error between the real next state $s^{\prime}$ and the predicted ones $\hat{s^{\prime}}$ obeys a half-normal distribution, i.e., the probability density function $f(\|\hat{s^{\prime}}-s^{\prime}\|_2)=\frac{2}{\sqrt{2 \pi} \sigma} \exp \left(-\frac{\|\hat{s^{\prime}}-s^{\prime}\|_2^{2}}{2 \sigma^{2}}\right)$. Let $\mathcal{N}_{0}(N)=\mathcal{N}\left(\frac{1}{8}\left(\frac{\epsilon_0}{4}\right)^{p}, \mathcal{F}, N, \mu\right)$ denotes the covering number defined in Lemma \ref{lemma:Pollard}. Then for any $\epsilon_0,\delta_0 > 0$, the inequality
\begin{equation}
    \nonumber
    \left\|V_{k+1}-B V_k\right\|_{p, \mu} \leq d_{p, \mu}(B \mathcal{F}, \mathcal{F})+\epsilon_0
\end{equation}
holds w.p. at least $1-\delta_0$ provided that
\begin{equation}
\label{equa:bound for N}
    N>128\left(\frac{8 V_{\max }}{\epsilon_0}\right)^{2 p}\Big(\log (1 / \delta_0)+\log \left(32 \mathcal{N}_{0}(N) \right)\Big)
\end{equation}
and
\begin{equation}
\label{equa:bound for sigma}
    \sigma<\frac{\epsilon_0}{4\gamma K_V \Phi^{-1}\Big(1-\delta_0/\big(8N|A|(1-\beta)\big)\Big)}.
\end{equation}
\end{lemma}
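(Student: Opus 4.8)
The plan is to extend the single--iteration fitted--value--iteration analysis of \citet{munos2008finite} by isolating the extra error caused by using model--generated next states, and to exploit that the environment is deterministic so that the \emph{only} deviation from the exact Bellman backup is the model error. Concretely, write $\hat V_k(s_i)=BV_k(s_i)+\xi_i$, where $\xi_i$ collects precisely this deviation. The argument then has two halves: (i) show that on a high--probability event all $|\xi_i|$ are below $\epsilon_0/4$, which is where the half--normal model--error assumption and the bound \eqref{equa:bound for sigma} on $\sigma$ are used; and (ii) conditioned on that event, run the usual empirical--risk--minimization argument for FVI with a bounded (not necessarily mean--zero) regression noise, which is where Lemma \ref{lemma:Pollard} and the bound \eqref{equa:bound for N} on $N$ are used. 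A final union bound over the two events yields the stated probability $1-\delta_0$.

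For half (i), determinism gives $BV_k(s_i)=\max_{a}\big(r(s_i,a)+\gamma V_k(T(s_i,a))\big)$ exactly, and $\hat V_k(s_i)$ differs only in that some of the next states entering the $\max$ are model predictions. The inequality $|\max_a f(a)-\max_a g(a)|\le\max_a|f(a)-g(a)|$ together with $K_V$--Lipschitzness of $V_k$ gives $|\xi_i|\le\gamma K_V\cdot\max\{\,\|\hat s'-s'\|_2 : (s_i,a)\text{ used the model}\,\}$. Each of the $N|\mathcal{A}|$ state--action pairs uses the model independently with probability $1-\beta$, and conditioned on that the model error $\|\hat s'-s'\|_2$ is half--normal with parameter $\sigma$; so a single pair contributes to the bad event with probability $(1-\beta)\cdot 2\big(1-\Phi(\tau_0/\sigma)\big)$, where $\tau_0:=\sigma\,\Phi^{-1}\!\big(1-\delta_0/(8N|\mathcal{A}|(1-\beta))\big)$. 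A union bound over all $N|\mathcal{A}|$ pairs makes the total failure probability at most $\delta_0/4$, and condition \eqref{equa:bound for sigma} is exactly what guarantees $\gamma K_V\tau_0<\epsilon_0/4$, so on this event $\max_i|\xi_i|<\epsilon_0/4$. (The factor $1-\beta$ inside $\Phi^{-1}$ is what makes this union bound cancel cleanly; this is the one place the mixture structure enters the proof.)

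For half (ii), work on the above event and treat $\{s_i\}$ as i.i.d.\ draws from $\mu$ (conditioned on $V_k$). Pick $f^\star\in\mathcal{F}$ attaining $\inf_{f\in\mathcal{F}}\|f-BV_k\|_{p,\mu}\le d_{p,\mu}(B\mathcal{F},\mathcal{F})$, which is legitimate since $V_k\in\mathcal{F}$ and hence $BV_k\in B\mathcal{F}$. Because $V_{k+1}$ minimizes the empirical $L^p$ risk against the noisy target $\hat V_k$, the triangle inequality for the $\ell^p$ norm on $\mathbb{R}^N$ (applied to the vectors $(f(s_i)-BV_k(s_i))_i$ and $(\xi_i)_i$) gives, for the empirical norms, $\|V_{k+1}-BV_k\|_{p,N}\le\|f^\star-BV_k\|_{p,N}+2\max_i|\xi_i|$. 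Then apply Lemma \ref{lemma:Pollard} to the loss class $\{\,s\mapsto|f(s)-BV_k(s)|^p : f\in\mathcal{F}\,\}$, whose range lies in $[0,(2V_{\max})^p]$: with the remaining failure budget $3\delta_0/4$ and the covering number $\mathcal{N}_0(N)$ at scale $\tfrac18(\tfrac{\epsilon_0}{4})^p$, the bound \eqref{equa:bound for N} forces $\sup_{f\in\mathcal{F}}\big|\tfrac1N\sum_i|f(s_i)-BV_k(s_i)|^p-\|f-BV_k\|_{p,\mu}^p\big|\le(\epsilon_0/4)^p$, which via $|a-b|\le|a^p-b^p|^{1/p}$ (for $a,b\ge0$, $p\ge1$) transfers to a uniform $\epsilon_0/4$ gap between empirical and true $L^p$ norms over $\mathcal{F}$. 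Chaining — true norm of $V_{k+1}$ to empirical, empirical to empirical of $f^\star$ via the displayed inequality, empirical of $f^\star$ back to true — and using $2\max_i|\xi_i|<\epsilon_0/2$ yields $\|V_{k+1}-BV_k\|_{p,\mu}\le d_{p,\mu}(B\mathcal{F},\mathcal{F})+\epsilon_0$.

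I expect the main obstacle to be the model--error part (half (i)) rather than the essentially standard ERM part: one must be careful that the number of model queries is a \emph{random} $\mathrm{Binomial}(N|\mathcal{A}|,1-\beta)$ quantity, so the union bound must range over all $N|\mathcal{A}|$ potential pairs, each weighted by the probability $1-\beta$ that it actually uses the model, rather than over the realized queries; and the Gaussian--quantile form of \eqref{equa:bound for sigma} must be matched to this bound so that $\gamma K_V$ times the per--query threshold $\tau_0$ lands below $\epsilon_0/4$. A secondary point of care is the $L^p$ (rather than $L^2$) bookkeeping: Lemma \ref{lemma:Pollard} controls averages of the $p$-th powers, so one must pass through $|a-b|\le|a^p-b^p|^{1/p}$, track the boundedness constant $(2V_{\max})^p$, and relate the covering number of the loss class to $\mathcal{N}_0(N)$ in order to land exactly on the stated forms of \eqref{equa:bound for N} and the $\tfrac18(\tfrac{\epsilon_0}{4})^p$ scale.
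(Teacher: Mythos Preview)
Your proposal is correct and follows essentially the same approach as the paper: the same five--step chain (true $\to$ empirical for $V_{k+1}$, swap $BV_k$ for $\hat V_k$, ERM optimality, swap back, empirical $\to$ true for $f^\star$), the same use of Pollard's inequality for the concentration steps, and the same per--pair half--normal tail bound with a union over all $N|\mathcal{A}|$ potential model queries for the noise steps. The only cosmetic difference is your failure--budget split ($\delta_0/4$ for the model--error event, $3\delta_0/4$ for concentration) versus the paper's $\delta_0/4$ for each; since the stated constant in \eqref{equa:bound for N} corresponds to the $\delta_0/4$ allocation, your looser budget only makes that condition easier to satisfy, so no adjustment is needed.
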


\begin{proof}
Let $f^*$ be the best fit to $BV_{k}$ in $\mathcal{F}$: $f^{*} = \Pi_{\mathcal{F}} B V_k$, and let $\hat{\mu}$ denotes the distribution of the $N$ sampled states $\{s_i\}_{i=1}^{N}$. Define $\|\cdot\|_{p,\hat{\mu}}$ as:
\begin{equation}
    \nonumber
    \|f\|^{p}_{p,\hat{\mu}} = \frac{1}{N} \sum_{i=1}^{N}\left|f\left(s_{i}\right)\right|^{p}.
\end{equation}

If the following inequalities hold simultaneously w.p. not smaller than $1-\delta_0$, then the Lemma can be proved by choosing $\epsilon_0^{\prime} = \epsilon_0/4$:
\begin{align}
\left\|V_{k+1}-B V_k\right\|_{p, \mu} & \leq\left\|V_{k+1}-B V_k\right\|_{p, \hat{\mu}}+\epsilon_0^{\prime} \label{single iter inequality 1}
\\& \leq\|V_{k+1}-\hat{V}_{k}\|_{p, \hat{\mu}}+2 \epsilon_0^{\prime} \label{single iter inequality 2}
\\& \leq\|f^{*}-\hat{V}_{k}\|_{p, \hat{\mu}}+2 \epsilon_0^{\prime} \label{single iter inequality 3}
\\& \leq\left\|f^{*}-B V_k\right\|_{p, \hat{\mu}}+3 \epsilon_0^{\prime} \label{single iter inequality 4}
\\& \leq\left\|f^{*}-B V_k\right\|_{p, \mu}+4 \epsilon_0^{\prime} \label{single iter inequality 5}
\\
\nonumber
& \leq d_{p, \mu}(B \mathcal{F}, \mathcal{F})+4 \epsilon_0^{\prime} 
\end{align}

Since $V_{k+1}$ is the best fit to $\hat{V}_{k}$ in $\mathcal{F}$, $(\ref{single iter inequality 3})$ holds w.p. 1. So we only need to prove that $(\ref{single iter inequality 1})$, $(\ref{single iter inequality 2})$, $(\ref{single iter inequality 4})$, $(\ref{single iter inequality 5})$ hold w.p. at least $1-\delta_0^{\prime}$ where $\delta_0^{\prime} = \delta_0/4$. The proof of $(\ref{single iter inequality 1})$ and $(\ref{single iter inequality 5})$ is the same as that in (\citet{munos2008finite}, Lemma1), and we provide a quick proof here for completeness. Let
\begin{equation}
    \nonumber
    Q=\max \left(\left|\left\|V_{k+1}-B V_k\right\|_{p, \mu}-\left\|V_{k+1}-B V_k\right\|_{p, \hat{\mu}}\right|,\left|\left\|f^{*}-B V_k\right\|_{p, \mu}-\left\|f^{*}-B V_k\right\|_{p, \hat{\mu}}\right|\right).
\end{equation}
So $(\ref{single iter inequality 1})$ and $(\ref{single iter inequality 5})$ will follow if
\begin{equation}
\label{inequa:Q_epsilon}
    \mathbb{P}\left(Q>\epsilon_0^{\prime}\right) \leq \delta_0^{\prime}.
\end{equation}
And due to the inequality:
\begin{equation}
    Q \leq \sup _{f \in \mathcal{F}}\Big|\|f-B V_k\|_{p, \mu}-\|f-B V_k\|_{p, \hat{\mu}}\Big|,
\end{equation}
we have:
\begin{equation}
\label{inequal_to_combine_1}
    \mathbb{P}\left(Q>\epsilon_0^{\prime}\right) \leq \mathbb{P}\left(\sup _{f \in \mathcal{F}}\Big|\|f-B V_k\|_{p, \mu}-\|f-B V_k\|_{p, \hat{\mu}}\Big|>\epsilon_0^{\prime}\right).
\end{equation}
For any event $\omega$ such that
\begin{equation}
    \sup _{f \in \mathcal{F}}\Big|\|f-B V_k\|_{p, \mu}-\|f-B V_k\|_{p, \hat{\mu}}\Big|>\epsilon_0^{\prime},
\end{equation}
there exist a function $f^{\prime} \in \mathcal{F}$ such that
\begin{equation}
    \Big|\|f^{\prime}-B V_k\|_{p, \mu}-\|f^{\prime}-B V_k\|_{p, \hat{\mu}}\Big|>\epsilon_0^{\prime}.
\end{equation}
First assume that $\|f^{\prime}-B V_k\|_{p, \hat{\mu}} \leq \|f^{\prime}-B V_k\|_{p, \mu}$. Hence, $\|f^{\prime}-B V_k\|_{p, \hat{\mu}} + \epsilon_0^{\prime} \leq \|f^{\prime}-B V_k\|_{p, \mu}$. Since the elementary inequality $x^p + y^p \leq (x+y)^{p}$ holds for $p \geq 1$ and any non-negative x, y, we can get $\|f^{\prime}-B V_k\|_{p, \hat{\mu}}^{p} + (\epsilon_0^{\prime})^{p} \leq (\|f^{\prime}-B V_k\|_{p, \hat{\mu}} + \epsilon_0^{\prime})^{p} \leq \|f^{\prime}-B V_k\|_{p, \mu}^{p}$. And thus
\begin{equation}
    \left|\left\|f^{\prime}-B V_k\right\|_{p, {\mu}}^{p}-\left\|f^{\prime}-B V_k\right\|_{p, \hat{\mu}}^{p}\right|>(\epsilon_0^{\prime})^{p}.
\end{equation}
This inequality holds for an analogous reason when $\|f^{\prime}-B V_k\|_{p, \hat{\mu}} > \|f^{\prime}-B V_k\|_{p, \mu}$. And since
\begin{equation}
    \sup _{f \in \mathcal{F}}\left|\|f-B V_k\|_{p, \mu}^{p}-\|f-B V_k\|_{p, \hat{\mu}}^{p}\right| \geq\left|\left\|f^{\prime}-B V_k\right\|_{p, \mu}^{p}-\left\|f^{\prime}-B V_k\right\|_{p, \hat{\mu}}^{p}\right|,
\end{equation}
we can get
\begin{equation}
\label{inequal_to_combine_2}
    \mathbb{P}\left(\sup _{f \in \mathcal{F}}\Big|\|f \!-\! B V_k\|_{p, \mu} \!-\! \|f \!-\! B V_k\|_{p, \hat{\mu}}\Big| \!>\! \epsilon_0^{\prime}\right) \!\leq\! \mathbb{P}\left(\sup _{f \in \mathcal{F}}\left|\|f \!-\! B V_k\|_{p, \mu}^{p} \!-\! \|f \!-\! B V_k\|_{p, \hat{\mu}}^{p}\right| \!>\! \left(\epsilon_0^{\prime}\right)^{p}\right).
\end{equation}
Observe that $\|f-B V_k\|_{p, \mu}^{p}=\mathbb{E}\left[\left|\left(f\left(s_{1}\right)-B V_k\left(s_{1}\right)\right)\right|^{p}\right]$, and $\|f-B V_k\|_{p, \hat{\mu}}^{p}$ is just the sample average approximation of $\|f-B V_k\|_{p, \mu}^{p}$. Using Lemma \ref{lemma:Pollard}, we can get
\begin{equation}
\label{inequal_to_combine_3}
\begin{aligned}
    \mathbb{P}\left(\sup _{f \in \mathcal{F}}\left|\|f \!-\! B V_k\|_{p, \mu}^{p} \!-\! \|f \!-\! B V_k\|_{p, \hat{\mu}}^{p}\right| \!>\! \left(\epsilon_0^{\prime}\right)^{p}\right) \!&\leq\! 8 \mathbb{E} \!\left[\mathcal{N}\Big(\frac{\left(\epsilon_0^{\prime}\right)^{p}}{8}, \mathcal{F}\left(X^{1: N}\right) \!\Big) \!\right] e^{-\frac{N}{2} \!\left(\! \frac{1}{8} \!\big( \frac{\epsilon_0^{\prime}}{2 V_{max }} \!\big)^{p}\right)^{2}}
    \\ &= 8\mathcal{N}_{0}(N) e^{-\frac{N}{2}\left(\frac{1}{8}\big(\frac{\epsilon_0^{\prime}}{2 V_{max }}\big)^{p}\right)^{2}}.
\end{aligned}
\end{equation}
Making the right-hand side upper bounded by $\delta_0^{\prime} = \delta_0/4$ yields a lower bound of $N$, displayed in (\ref{equa:bound for N}). Then (\ref{inequa:Q_epsilon}) can be proved by combining (\ref{inequal_to_combine_1}), (\ref{inequal_to_combine_2}), and (\ref{inequal_to_combine_3}).

Now we turn to prove (\ref{single iter inequality 2}) and (\ref{single iter inequality 4}). For an arbitrary function $f \in \mathcal{F}$, using the triangle inequality, we have
\begin{equation}
    \left|\|f-B V_k\|_{p, \hat{\mu}}-\|f-\hat{V}_k\|_{p, \hat{\mu}}\right| \leq\|B V_k-\hat{V}_k\|_{p, \hat{\mu}}.
\end{equation}

So if we show that $\|B V_k-\hat{V}_k\|_{p, \hat{\mu}} \leq \epsilon_0^{\prime}$ holds w.p. $ 1-\delta_0^{\prime}$, then (\ref{single iter inequality 2}) and (\ref{single iter inequality 4}) can be proved by choosing $f = V_{k+1}$ and $f = f^{*}$, respectively. To prove $\|B V_k-\hat{V}_k\|_{p, \hat{\mu}} \leq \epsilon_0^{\prime}$, Recall that the data $(s,a,\hat{s^{\prime}},\hat{r})$ used to calculate $\hat{V}_k$ is sampled from real environment w.p. $\beta$ (case 1), or from the learned model w.p. $1-\beta$ (case 2).

For case 1, it satisfies that $\hat{s^{\prime}} = s^{\prime}$ and $\hat{r} = r$. So we have
\begin{equation}
    \mathbb{P}\left(|\hat{r} + \gamma V_k(\hat{s^{\prime}}) - r - \gamma V_k(s^{\prime})| > \epsilon_0^{\prime} \right) = 0.
\end{equation}

And for case 2, like in previous literature \cite{pets,slbo}, assume the reward function is known, i.e., $\hat{r} = r$. We have
\begin{align}
    \mathbb{P}\left(|\hat{r} + \gamma V_k(\hat{s^{\prime}}) - r - \gamma V_k(s^{\prime})| > \epsilon_0^{\prime} \right) &= \mathbb{P}\left(
    \gamma| (V_k(\hat{s^{\prime}}) -V_k(s^{\prime}))| > \epsilon_0^{\prime} \right) \nonumber
    \\&\leq \mathbb{P}\left(\gamma K_{V}\|(\hat{s^{\prime}} -s^{\prime})\|_{2} > \epsilon_0^{\prime} \right)
    \nonumber
    \\&= \mathbb{P}\left(\|\hat{s^{\prime}} -s^{\prime}\|_{2} > \frac{\epsilon_0^{\prime}}{\gamma K_{V}} \right).
\end{align}

By using the model error assumption: $f(\|\hat{s^{\prime}}-s^{\prime}\|_2)=\frac{2}{\sqrt{2 \pi} \sigma} \exp \left(-\frac{\|\hat{s^{\prime}}-s^{\prime}\|_2^{2}}{2 \sigma^{2}}\right)$, we can write that:
\begin{align}
    \mathbb{P}\left(|\hat{r} + \gamma V_k(\hat{s^{\prime}}) - r - \gamma V_k(s^{\prime})| > \epsilon_0^{\prime} \right) &\leq \mathbb{P}\left(\|\hat{s^{\prime}} -s^{\prime}\|_{2} > \frac{\epsilon_0^{\prime}}{\gamma K_{V}} \right)\nonumber
    \\&= \mathbb{P}\left(\frac{\|\hat{s^{\prime}} -s^{\prime}\|}{\sigma} > \frac{\epsilon_0^{\prime}}{\gamma K_{V} \sigma} \right)\nonumber
    \\&= 2\Big(1-\Phi(\frac{\epsilon_0^{\prime}}{\gamma K_{V} \sigma})\Big),
\end{align}
where $\Phi$ is the Cumulative Distribution Function (CDF) of the standard normal distribution. Combine these two cases, we can get
\begin{equation}
    \mathbb{P}\left(\Big|\hat{r} + \gamma V_k(\hat{s^{\prime}}) - r - \gamma V_k(s^{\prime})\Big| > \epsilon_0^{\prime} \right) \leq 2(1-\beta)\Big(1-\Phi(\frac{\epsilon_0^{\prime}}{\gamma K_{V} \sigma})\Big).
\end{equation}
Making the right-hand side upper bounded by $\delta_0^{\prime}/(N|\mathcal{A}|)$ yields an upper bound of $\sigma$, displayed in (\ref{equa:bound for sigma}). And for each sampled state $s_i$, since
\begin{equation}
    \nonumber
    \left|B V_k\left(s_{i}\right)-\hat{V}_k\left(s_{i}\right)\right| \leq \max _{a \in \mathcal{A}} \Big|\hat{r} + \gamma V_k(\hat{s^{\prime}}) - r - \gamma V_k(s^{\prime})\Big|,
\end{equation}
by using a union bounding argument, we can get
\begin{equation}
     \mathbb{P}\left(\left|B V_k\left(s_{i}\right)-\hat{V}_k\left(s_{i}\right)\right|>\epsilon_0^{\prime} \right) \leq \delta_0^{\prime} / N.
\end{equation}
And by another union bounding argument, we have
\begin{equation}
     \mathbb{P}\left(\max_{i=1,\ldots,N}\left|B V_k\left(s_{i}\right)-\hat{V}_k\left(s_{i}\right)\right|>\epsilon_0^{\prime} \right) \leq \delta_0^{\prime}.
\end{equation}
And therefore,
\begin{equation}
     \mathbb{P}\left(\frac{1}{N}\left|B V_k\left(s_{i}\right)-\hat{V}_k\left(s_{i}\right)\right|>\epsilon_0^{\prime} \right) \leq \delta_0^{\prime}.
\end{equation}
Hence, we have proved that $\|B V_k-\hat{V}_k\|_{p, \hat{\mu}} \leq \epsilon_0^{\prime}$ holds w.p. at least $ 1-\delta_0^{\prime}$. This completes the whole proof.
\end{proof}

\begin{lemma}
\label{lemma:munos theorem2}
(\citet{munos2008finite}, Theorem 2) Under the concentrability assumption (Assumption \ref{assumption:concentrability}), let $\rho$ be an arbitrary state distribution, and $C_{\rho,\mu}$ be the discounted-average concentrability coefficient. If each iteration error can be bounded as $\left\|V_{k+1}-B V_k\right\|_{p, \mu} \leq d_{p, \mu}(B \mathcal{F}, \mathcal{F})+ (1-\gamma)^{2}\epsilon /(4 \gamma C_{\rho, \mu}^{1 / p})$ w.p. at least $1-\delta/K$ for $0 \leq k < K$, then the loss due to using $\pi_K$ instead of the optimal policy $\pi^{*}$ satisfies that w.p. at least $1-\delta$,
\begin{equation}
    \left\|V^{*}-V^{\pi_{K}}\right\|_{p, \rho} \leq \frac{2 \gamma}{(1-\gamma)^{2}} C_{\rho, \mu}^{1 / p} d_{p, \mu}(B \mathcal{F}, \mathcal{F})+\epsilon,
\end{equation}
provided that 
\begin{equation}
\label{equa: bound for K}
    \gamma^{K}<\left[\frac{(1-\gamma)^{2}}{8 \gamma V_{\max }} \epsilon\right]^{p}.
\end{equation}
\end{lemma}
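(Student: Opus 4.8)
The plan is to follow the approximate-value-iteration error-propagation argument of \citet{munos2008finite}, treating the single-iteration bound of Lemma \ref{appendix Lemma:Single Iteration Error Bound} as the per-step hypothesis and Assumption \ref{assumption:concentrability} as the change-of-measure tool. Write $\varepsilon_k = V_{k+1} - B V_k$ for the per-iteration residual, so the hypothesis reads $\|\varepsilon_k\|_{p,\mu} \leq d_{p,\mu}(B\mathcal{F},\mathcal{F}) + (1-\gamma)^2\epsilon/(4\gamma C_{\rho,\mu}^{1/p})$ with probability at least $1-\delta/K$. The first step is to establish a pointwise (componentwise) bound on the nonnegative loss vector $V^* - V^{\pi_K}$. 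Using Bellman optimality $BV^* = V^*$, the greedy property $BV_K = T^{\pi_K}V_K$, and the $\gamma$-contraction of the policy operators $T^{\pi^*}$ and $T^{\pi_K}$, I would telescope the iteration to obtain
\begin{equation}
\nonumber
0 \leq V^* - V^{\pi_K} \leq \frac{2\gamma(1-\gamma^{K+1})}{(1-\gamma)^2}\left[\sum_{k=0}^{K-1}\alpha_k A_k|\varepsilon_k| + \alpha_K A_K|V^*-V_0|\right],
\end{equation}
where the $A_k$ are stochastic averaging operators (convex combinations of products of transition kernels $P^{\pi}$) and the weights $\alpha_k=(1-\gamma)\gamma^{K-k-1}/(1-\gamma^{K+1})$, $\alpha_K=(1-\gamma)\gamma^K/(1-\gamma^{K+1})$ are nonnegative and sum to one. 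The factor $2$ and the two $1/(1-\gamma)$ factors arise respectively from combining the $\pi^*$ and $\pi_K$ paths and from expanding the resolvent $(I-\gamma P^{\pi_K})^{-1}=\sum_{j\geq0}\gamma^j(P^{\pi_K})^j$ together with the telescoping geometric sum.

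Second, I would pass to $\|\cdot\|_{p,\rho}$. Because the $\alpha_k$ form a convex combination, two applications of Jensen's inequality — once through the combination $\sum_k\alpha_k(\cdot)$ and once through each stochastic operator $A_k$ — move the $p$-th power inside, giving $\int|V^*-V^{\pi_K}|^p\,d\rho \leq [\,\cdots\,]^p\sum_k\alpha_k\int|g_k|^p\,d(\rho A_k)$, with $g_k=\varepsilon_k$ for $k<K$ and $g_K=V^*-V_0$. Each pushed-forward measure $\rho A_k$ is a future-state distribution of the form $\rho P^{\pi_1}\cdots P^{\pi_m}$, so Assumption \ref{assumption:concentrability} bounds its Radon--Nikodym derivative w.r.t.\ $\mu$ by $c(m)$. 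The weighting by $\alpha_k$ and the $\gamma$-powers is matched exactly to the definition $C_{\rho,\mu}=(1-\gamma)^2\sum_m m\gamma^{m-1}c(m)$, so the accumulated concentrability factor collapses to $C_{\rho,\mu}$ and converts each $\int|\varepsilon_k|^p\,d(\rho A_k)$ into $C_{\rho,\mu}\|\varepsilon_k\|_{p,\mu}^p$ up to the $\alpha$-weights.

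Third, I would insert the per-iteration hypothesis. A union bound over the $K$ residuals $\varepsilon_0,\dots,\varepsilon_{K-1}$ turns the $K$ events of probability $1-\delta/K$ into a single event of probability at least $1-\delta$ on which $\max_{k<K}\|\varepsilon_k\|_{p,\mu}\leq d_{p,\mu}(B\mathcal{F},\mathcal{F})+(1-\gamma)^2\epsilon/(4\gamma C_{\rho,\mu}^{1/p})$. Taking $p$-th roots and using $(1-\gamma^{K+1})\leq 1$ bounds the prefactor by $\tfrac{2\gamma}{(1-\gamma)^2}$, and multiplying $C_{\rho,\mu}^{1/p}$ against the slack term yields exactly $\tfrac{2\gamma}{(1-\gamma)^2}C_{\rho,\mu}^{1/p}\cdot\tfrac{(1-\gamma)^2\epsilon}{4\gamma C_{\rho,\mu}^{1/p}}=\tfrac{\epsilon}{2}$, leaving the advertised main term $\tfrac{2\gamma}{(1-\gamma)^2}C_{\rho,\mu}^{1/p}d_{p,\mu}(B\mathcal{F},\mathcal{F})$. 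The initial-condition term carries the weight $\alpha_K\propto\gamma^K$, which after the $p$-th root becomes $\gamma^{K/p}$; bounding $\|V^*-V_0\|_\infty\leq 2V_{\max}$ and invoking the hypothesis $\gamma^K<[(1-\gamma)^2\epsilon/(8\gamma V_{\max})]^p$ forces this term below $\epsilon/2$. Summing the two halves gives $\|V^*-V^{\pi_K}\|_{p,\rho}\leq\tfrac{2\gamma}{(1-\gamma)^2}C_{\rho,\mu}^{1/p}d_{p,\mu}(B\mathcal{F},\mathcal{F})+\epsilon$.

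The main obstacle I anticipate is the first step: correctly deriving the pointwise vector inequality with the right constants — in particular tracking the products of transition kernels so that the $A_k$ really are stochastic and the weights $\alpha_k$ genuinely sum to one, which is precisely what legitimizes the two Jensen applications and the collapse of the concentrability sum to $C_{\rho,\mu}$. The probabilistic and constant-chasing parts (the union bound, the $p$-th roots, and matching the $\tfrac14$ and $\tfrac18$ constants) are routine once the deterministic error-propagation inequality is in hand.
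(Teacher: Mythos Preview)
The paper does not give its own proof of this lemma: it is simply quoted verbatim from \citet{munos2008finite} (their Theorem~2) and used as a black box in the proof of Theorem~\ref{appendix theorem: return bound}. Your proposal is a faithful reconstruction of the Munos--Szepesv\'ari error-propagation argument --- pointwise telescoping through $T^{\pi^*}$ and $T^{\pi_K}$, two Jensen steps to pass to $\|\cdot\|_{p,\rho}$, the change of measure via Assumption~\ref{assumption:concentrability} that collapses the weighted sum to $C_{\rho,\mu}$, and a union bound over the $K$ iterations --- so it is both correct and aligned with the source the paper cites.
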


Lemma \ref{lemma:munos theorem2} shows that if the error in each iteration can be bounded with high probability, then the return discrepancy between $\pi_{K}$ and $\pi^{*}$ can be bounded with high probability when the number of iterations $K$ is large enough. Moreover, we can set $\rho$ to the distribution of the states we care about more, e.g., the initial states we start to execute $\pi_{K}$.

\begin{theorem}
\label{appendix theorem: return bound}
($\beta$-mixture sampling-based FVI bound) Assume that the pseudo-dimension of the value function space is finite as in Proposition \ref{proposition:pseudo-dimension}, under the concentrability assumption (Assumption \ref{assumption:concentrability}) and the same assumptions of Lemma \ref{Lemma:Single Iteration Error Bound}, let $\rho$ be an arbitrary state distribution, $C_{\rho,\mu}$ be the discounted-average concentrability coefficient and $\pi_{K}$ be the greedy policy w.r.t. $V_K$. Let $V^{\pi_{K}}$ and $V^{*}$ be the expected return of executing $\pi_{K}$ and the optimal policy $\pi^{*}$ in real environment, respectively. Define $N_{\rm{real}} = N \cdot |\mathcal{A}|\cdot \beta$ as the expected number of real samples. Then the following bound holds w.p. at least $1-\delta$:
\begin{equation}
\begin{aligned}
    \|V^{*}-V^{\pi_{K}}\|_{p, \rho} \leq& \frac{2 \gamma}{(1-\gamma)^{2}} C_{\rho, \mu}^{1 / p} d_{p, \mu}(B \mathcal{F}, \mathcal{F})
+ O\left(\Big(\frac{\beta |\mathcal{A}|}{N_{\rm{real}}}\big(\log (\frac{N_{\rm{real}}}{\beta |\mathcal{A}|})+\log (\frac{K}{\delta})\big)\Big)^{\frac{1}{2p}}\right) 
\\&+O\Big(\Phi^{-1}\big(1-\frac{\beta\delta}{8KN_{\rm{real}}(1-\beta)}\big) \sigma\Big)
+O\left(\gamma^{K/p} V_{\max }\right).
\end{aligned}    
\end{equation}

\end{theorem}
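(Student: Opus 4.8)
The plan is to combine the single-iteration error bound of Lemma~\ref{Lemma:Single Iteration Error Bound} with the error-propagation result of Lemma~\ref{lemma:munos theorem2} (Munos \& Szepesv\'ari, Theorem~2), tuning the free parameters $\epsilon_0,\delta_0$ so that the per-iteration hypothesis of Lemma~\ref{lemma:munos theorem2} is met with the right aggregate failure probability. Concretely, I would apply Lemma~\ref{lemma:munos theorem2} with $\delta_0 = \delta/K$ so that a union bound over the $K$ iterations gives overall success probability at least $1-\delta$, and I would split the allowed per-iteration error into the irreducible term $d_{p,\mu}(B\mathcal{F},\mathcal{F})$ plus a slack $\epsilon_0$ that must be at most $(1-\gamma)^2\epsilon/(4\gamma C_{\rho,\mu}^{1/p})$; then Lemma~\ref{lemma:munos theorem2} yields $\|V^*-V^{\pi_K}\|_{p,\rho} \le \frac{2\gamma}{(1-\gamma)^2}C_{\rho,\mu}^{1/p} d_{p,\mu}(B\mathcal{F},\mathcal{F}) + \epsilon$, plus a truncation requirement on $\gamma^K$ which produces the $O(\gamma^{K/p}V_{\max})$ term.

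Next I would unpack what $\epsilon_0$ can be taken to be, given the sample budget $N$ and model noise level $\sigma$. Lemma~\ref{Lemma:Single Iteration Error Bound} requires both $N > 128(8V_{\max}/\epsilon_0)^{2p}(\log(1/\delta_0) + \log(32\mathcal{N}_0(N)))$ and $\sigma < \epsilon_0 / \big(4\gamma K_V \Phi^{-1}(1-\delta_0/(8N|\mathcal{A}|(1-\beta)))\big)$. Inverting the first inequality, and using Proposition~\ref{proposition:pseudo-dimension} to replace the covering number $\mathcal{N}_0(N)$ by a polynomial $e(V_{\mathcal{F}^+}+1)(2eL/\epsilon_0)^{V_{\mathcal{F}^+}}$ in $1/\epsilon_0$ (so that $\log\mathcal{N}_0(N)$ is absorbed into a $\log(1/\epsilon_0)$ which is lower-order and can be swallowed by constants, exactly as in Munos--Szepesv\'ari), the smallest admissible $\epsilon_0$ from the data-size constraint scales like $O\big((\tfrac{1}{N}(\log N + \log(1/\delta_0)))^{1/(2p)}\big)$. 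Substituting $\delta_0 = \delta/K$ and $N = N_{\rm real}/(\beta|\mathcal{A}|)$ gives precisely the second term $O\big((\tfrac{\beta|\mathcal{A}|}{N_{\rm real}}(\log\tfrac{N_{\rm real}}{\beta|\mathcal{A}|} + \log\tfrac{K}{\delta}))^{1/(2p)}\big)$. Inverting the second inequality, the error contributed by the model is of order $\gamma K_V \Phi^{-1}(1-\delta_0/(8N|\mathcal{A}|(1-\beta)))\,\sigma$; again substituting $\delta_0 = \delta/K$ and $N|\mathcal{A}| = N_{\rm real}/\beta$ gives the third term $O\big(\Phi^{-1}(1-\tfrac{\beta\delta}{8KN_{\rm real}(1-\beta)})\,\sigma\big)$. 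Adding these two pieces (which together upper-bound the achievable $\epsilon_0$, hence serve as a valid $\epsilon$ up to the constant $\tfrac{(1-\gamma)^2}{4\gamma C_{\rho,\mu}^{1/p}}$ that is absorbed into the $O(\cdot)$) completes the bound.

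The bookkeeping obstacle I expect to be the main one is handling the implicit appearance of $N$ on both sides of the sample-size inequality (\ref{equa:bound for N}): $N$ appears inside $\mathcal{N}_0(N)$ as well as outside, so strictly one cannot just solve for $\epsilon_0$ in closed form. The standard resolution—which I would invoke following Munos--Szepesv\'ari—is that Proposition~\ref{proposition:pseudo-dimension} makes $\log\mathcal{N}_0(N)$ grow only like $V_{\mathcal{F}^+}\log(1/\epsilon_0)$ with no dependence on $N$, so the $N$-dependence on the right is entirely through $\log(1/\delta_0)$, which is not there; the only $N$ is the $\log N$ that shows up after inversion through a routine ``$N \ge c\log N$ implies $N$ large'' argument, and this term is exactly what appears in the stated bound. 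A secondary subtlety is that one must check $\epsilon$ (the sum of the two stochastic terms) is small enough for the truncation condition $\gamma^K < [(1-\gamma)^2\epsilon/(8\gamma V_{\max})]^p$ of Lemma~\ref{lemma:munos theorem2} to be satisfiable for the chosen $K$; this just fixes the constant hidden in the $O(\gamma^{K/p}V_{\max})$ term and does not affect the form of the bound. The remaining steps—propagating the per-iteration bound through the $\gamma$-weighted concentrability sum, and collecting constants into the $\frac{2\gamma}{(1-\gamma)^2}C_{\rho,\mu}^{1/p}$ prefactor—are verbatim from Lemma~\ref{lemma:munos theorem2} and need no new argument.
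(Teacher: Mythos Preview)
Your proposal is correct and follows essentially the same route as the paper: apply Lemma~\ref{lemma:munos theorem2} with $\delta_0=\delta/K$ and $\epsilon_0=(1-\gamma)^2\epsilon/(4\gamma C_{\rho,\mu}^{1/p})$, invert the two constraints of Lemma~\ref{Lemma:Single Iteration Error Bound} to express $\epsilon$ in terms of $N$ and $\sigma$, use Proposition~\ref{proposition:pseudo-dimension} to replace $\log\mathcal{N}_0(N)$ by $O(\log(1/\epsilon_0))$ and then by $O(\log N)$ via inequality~(\ref{equa:bound for N}), and finally substitute $N=N_{\rm real}/(\beta|\mathcal{A}|)$. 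The paper's proof is terser on the bookkeeping you flag, but the logic is identical.
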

\begin{proof}
Lemma \ref{lemma:munos theorem2} implies that we can bound $\left\|V^{*}-V^{\pi_{K}}\right\|_{p, \rho}$ w.p. at least $1-\delta$ as
\begin{equation}
\label{equa:final bound}
    \left\|V^{*}-V^{\pi_{K}}\right\|_{p, \rho} \leq \frac{2 \gamma}{(1-\gamma)^{2}} C_{\rho, \mu}^{1 / p} d_{p, \mu}(B \mathcal{F}, \mathcal{F})+\epsilon,
\end{equation}
via bounding $\left\|V_{k+1}-B V_k\right\|_{p, \mu}$ w.p. at least $1-\delta/K$ as
\begin{equation}
\label{equa:single bound}
    \left\|V_{k+1}-B V_k\right\|_{p, \mu} \leq d_{p, \mu}(B \mathcal{F}, \mathcal{F})+ (1-\gamma)^{2}\epsilon /(4 \gamma C_{\rho, \mu}^{1 / p}).
\end{equation}
Using Lemma \ref{appendix Lemma:Single Iteration Error Bound} to bound (\ref{equa:single bound}) by setting $\epsilon_0 = (1-\gamma)^{2}\epsilon /(4 \gamma C_{\rho, \mu}^{1 / p})$ and $\delta_0 = \delta / K$. We can get the corresponding bounds of $N$ and $\sigma$ w.r.t. $\epsilon$. Combined with (\ref{equa: bound for K}), we can write $\epsilon$ as
\begin{equation}
    \epsilon = O\left(\Big(\frac{1}{N}\big(\log (\mathcal{N}_{0}(N))+\log (\frac{K}{\delta})\big)\Big)^{\frac{1}{2p}}\right) 
+O\Big(\Phi^{-1}\big(1-\frac{\delta}{8KN|\mathcal{A}|(1-\beta)}\big) \sigma\Big)
+O\left(\gamma^{K/p} V_{\max }\right).
\end{equation}
Then using Proposition \ref{proposition:pseudo-dimension} to bound $\mathcal{N}_{0}(N)$, we can get:
\begin{equation}
\begin{aligned}
\label{equa:epsilon}
    \epsilon &\leq O\left(\Big(\frac{1}{N}\big(\log (1/\epsilon_0)+\log (\frac{K}{\delta})\big)\Big)^{\frac{1}{2p}}\right) 
+O\Big(\Phi^{-1}\big(1-\frac{\delta}{8KN|\mathcal{A}|(1-\beta)}\big) \sigma\Big)
+O\left(\gamma^{K/p} V_{\max }\right)
\\ &\leq O\left(\Big(\frac{1}{N}\big(\log (N)+\log (\frac{K}{\delta})\big)\Big)^{\frac{1}{2p}}\right) 
+O\Big(\Phi^{-1}\big(1-\frac{\delta}{8KN|\mathcal{A}|(1-\beta)}\big) \sigma\Big)
+O\left(\gamma^{K/p} V_{\max }\right),
\end{aligned}
\end{equation}
where the last inequality comes from inequality \ref{equa:bound for N}. Plugging (\ref{equa:epsilon}) into (\ref{equa:final bound}) and substituting $N$ with $N_{\rm{real}}/(\beta |\mathcal{A}|)$ complete the proof.
\end{proof}

\section{Model Error Assumption}
\label{appendix:model error assumption}
In Lemma \ref{Lemma:Single Iteration Error Bound}
and Theorem \ref{theorem: return bound}, we assume that the model error between the real next state $s^{\prime}$ and the predicted ones $\hat{s^{\prime}}$ obeys a half-normal distribution, i.e., the probability density function $f(\|\hat{s^{\prime}}-s^{\prime}\|_2)=\frac{2}{\sqrt{2 \pi} \sigma} \exp \left(-\frac{\|\hat{s^{\prime}}-s^{\prime}\|_2^{2}}{2 \sigma^{2}}\right)$. In this section, we give an empirical analysis of this model error assumption. To be more specific, we test the trained dynamics model in MBPO on newly collected data (typically 10k transitions) and plot the frequency of different model error ranges in Figure \ref{fig:model error} to approximate the distribution. We can observe that the approximated distribution is close to the half-normal distribution, i.e., the probability of small model error is higher than that of large model error, which empirically shows the rationality of the assumption.
 
 \begin{figure*}[htb]
	\centering
	
	\vspace{-5pt}
	\includegraphics[width=0.99\textwidth]{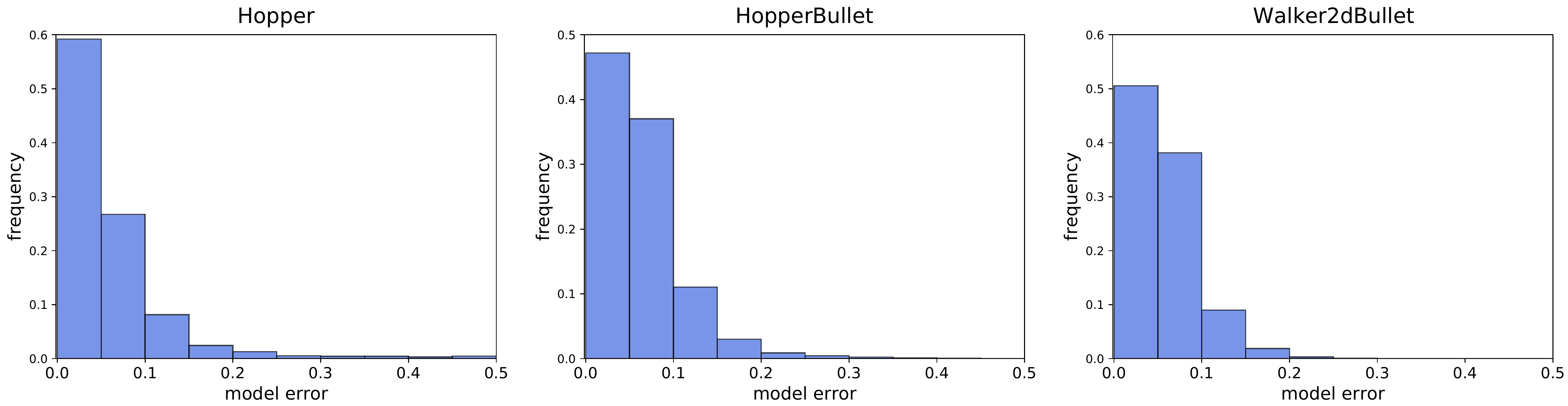}
	\caption{Frequency distribution histogram of the model error $\|\hat{s^{\prime}}-s^{\prime}\|_2$. The y-axis indicates the frequency of the corresponding range.}\label{fig:model error}
\end{figure*}
 
\section{Algorithm Comparison}
\label{appendix:algo comparison}
We compare the hyperparameters that can be scheduled in different Dyna-style MBRL algorithms in Table \ref{tab:algo comparison}. From the comparison, real ratio and rollout length can only be scheduled in MBPO and MA-BDDPG since other algorithms merely use the imaginary data to train the policy and need to generate model rollouts from the initial state to the end of the episode. Considering that MBPO is much more effective than MA-BDDPG in continuous control benchmark tasks \cite{mbpo,kalweit2017uncertainty}, we finally choose MBPO as a representative running case.

\begin{table}[htb]
\centering
\caption{Comparison of the hyperparameters that can be scheduled in different Dyna-style MBRL algorithms.}\label{tab:algo comparison}
\vskip 0.10in
\begin{tabular}{@{}c|c|c|c|c@{}}
\toprule
 & Real Ratio & \tabincell{c}{Policy Training\\Iteration}  & \tabincell{c}{Model Training\\Frequency}  & Rollout Length\\ \midrule
MBPO \cite{mbpo} & \checkmark & \checkmark & \checkmark & \checkmark\\
MA-BDDPG \cite{kalweit2017uncertainty} & \checkmark & \checkmark & \checkmark & \checkmark\\
SLBO \cite{slbo} & $\times$ & \checkmark & \checkmark & $\times$\\
ME-TRPO \cite{metrpo} & $\times$ & \checkmark & \checkmark & $\times$\\
MB-MPO \cite{clavera2018model} & $\times$ & \checkmark & \checkmark & $\times$\\
PAL/MAL \cite{rajeswaran2020game} & $\times$ & \checkmark & \checkmark & $\times$\\
\bottomrule
\end{tabular}
\end{table}
 
\section{More Experimental Results}
\label{appendix:more experiments results}
\subsection{Hyperparameter Importance}
\label{sec:full figure}
\begin{figure*}[htb!]
	\centering
	\vspace{-5pt}
	\includegraphics[width=0.95\textwidth]{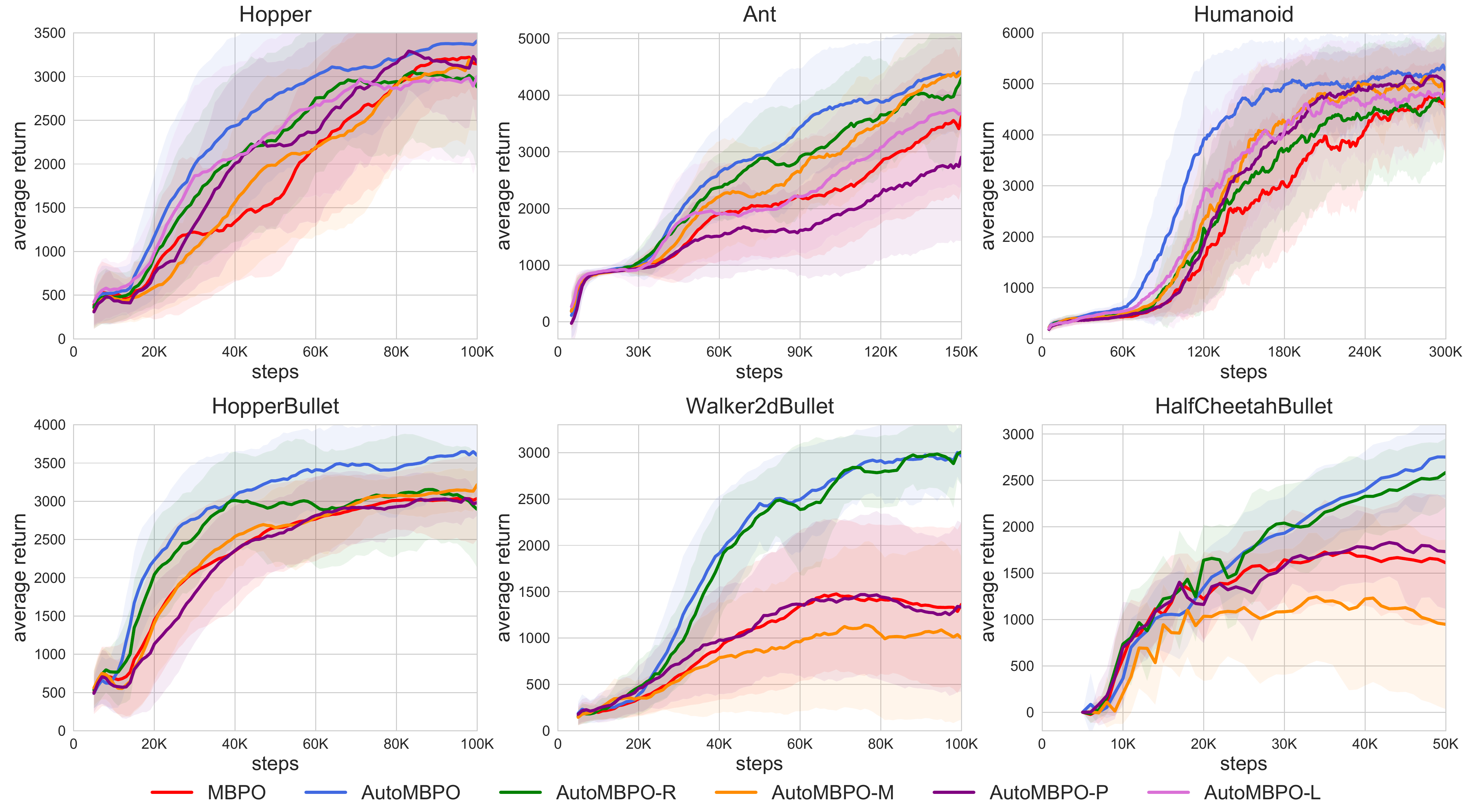}
	\caption{Complete figures of the hyperparameter importance experiments in Section \ref{sec:Importance of Hyperparameters}.
	}\label{fig:hyper_impor_full}
\end{figure*}

Due to the page limit, we only show the results of hyperparameter importance on three environments in Section \ref{sec:Importance of Hyperparameters}. For a more comprehensive analysis, we plot the results on all the six environments in Figure \ref{fig:hyper_impor_full}. The conclusion is the same as discussed in Section \ref{sec:Importance of Hyperparameters}, i.e., using the hyper-controller to schedule the real ratio retains much of the advantage of AutoMBPO.

\subsection{Controller Transfer}

From the results in Figure \ref{fig:hyperparameter}, we observe that the hyperparameter schedules on the three PyBullet environments are similar, especially for the real ratio $\beta$. Then we further conduct experiments to test whether the hyper-controller trained on these three tasks can transfer to others without additional fine-tuning. Results are shown in Figure \ref{fig:transfer}. Though the performance of the transferred hyper-controller is slightly worse than the original ones, they all surpass the MBPO baseline with a considerable margin, which shows the potential of the hyper-controller to capture the commonality to generalize to different tasks.
\begin{figure*}[htb]
	\centering
	\vspace{-4pt}
	\includegraphics[width=1\textwidth]{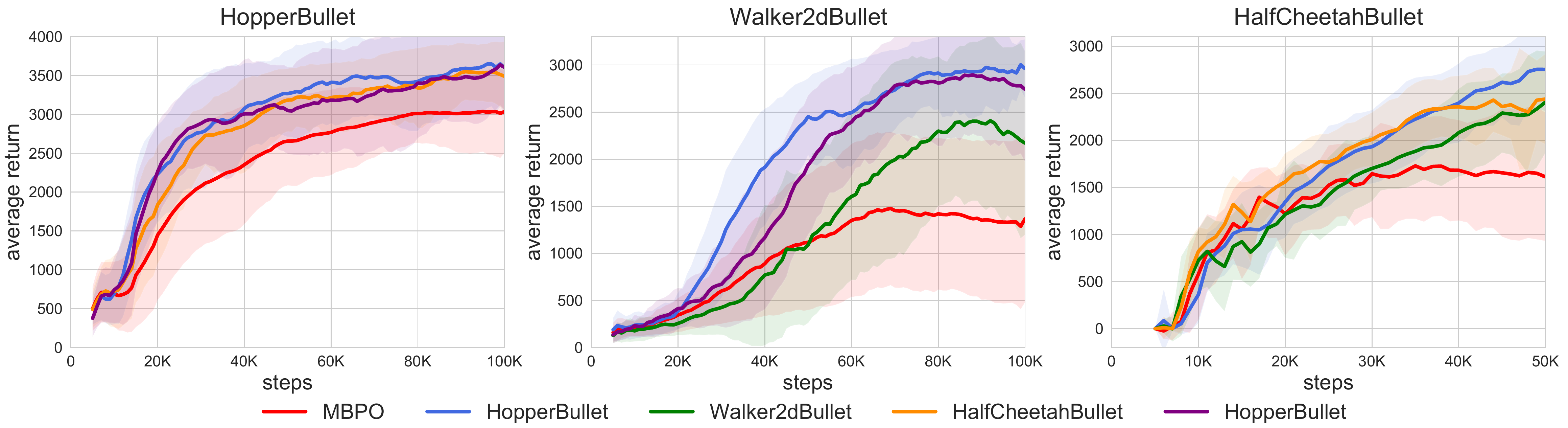}
	\vspace{-15pt}
	\caption{The transfer results of the hyper-controller among PyBullet environments with $A_3^2=6$ transferring cases in total. Each figure's title represents the target task, and each specific line indicates the source task. For example, the green line in the middle figure represents transferring the hyper-controller learned on HopperBullet to Walker2dBullet.}
	
	\vspace{-10pt}
	\label{fig:transfer}
\end{figure*}

\subsection{Effectiveness of AutoMBPO}
\label{appendix:effectiveness}
In Algorithm \ref{alg:AutoMBPO}, we train the hyper-controller until it achieves acceptable performance. One concern is whether the hyper-controller improves monotonically during training. We plot the performance of the hyper-controller in different training phases in Figure \ref{fig:diff_phase}. For example, suppose we train the hyper-controller with 200 hyper-MDP episodes, i.e., 200 MBPO instances in total, then the purple line ($1\%$-$20\%$) represents the average return of the $1$-$40$th MBPO instances, and so on. Monotonic improvement of hyper-controller can be observed during its training phase, which further demonstrates the effectiveness of our algorithm.

Another concern may be the extensive computational requirements for AutoMBPO. According to the computational time table in Appendix \ref{appendix:computing infrastructure}, take Hopper as an example: our algorithm takes about 90 hours to train the hyper-controller, about 5-6 times the time to train a complete MBPO instance, i.e., equivalent to 5-6 trials of MBPO hyperparameters. It is almost impossible to manually find a suitable configuration for these hyperparameters within such few trials.

\begin{figure*}[htb!]
	\centering
	\vspace{-5pt}
	\includegraphics[width=0.95\textwidth]{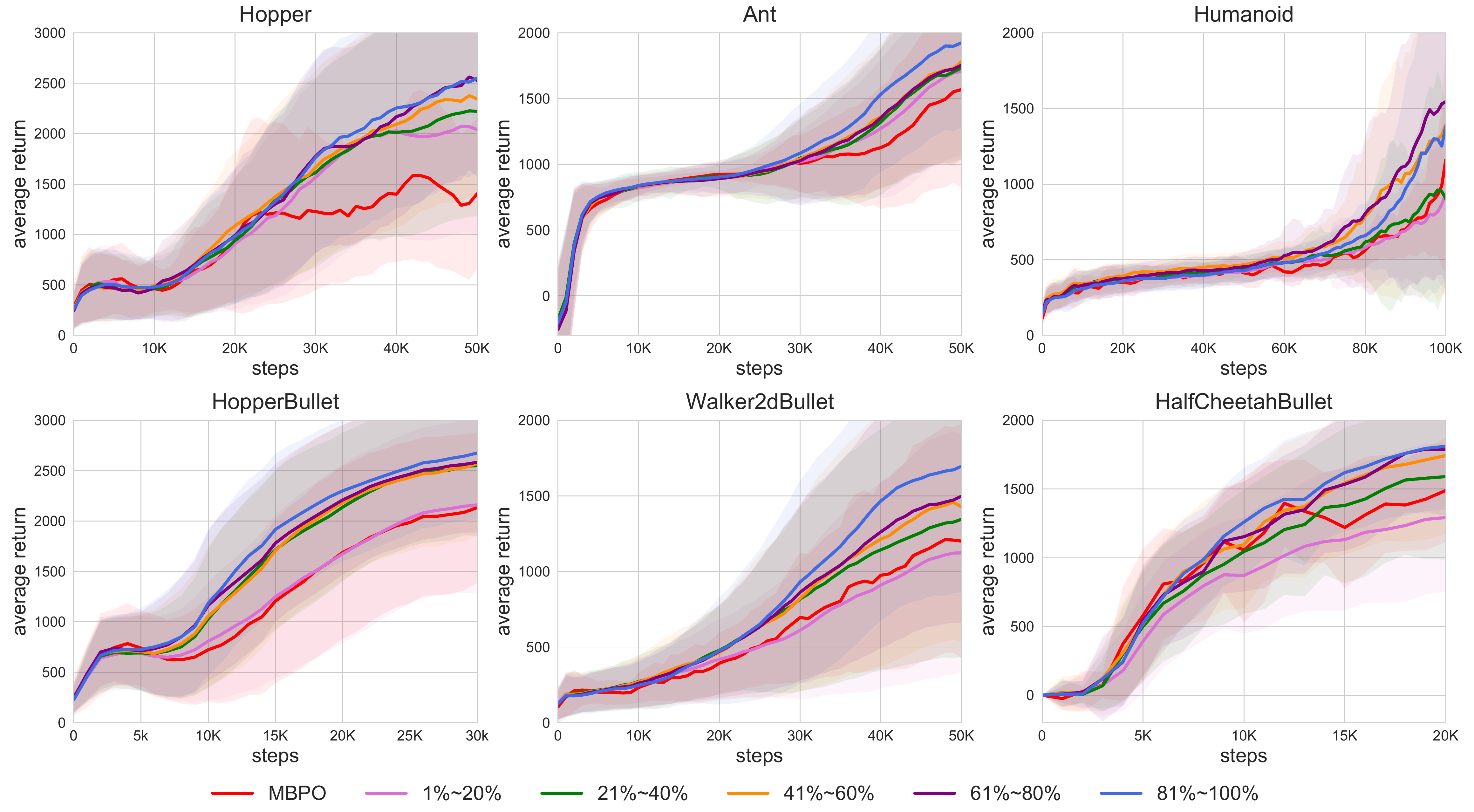}
	\vspace{-5pt}
	\caption{Performance of the hyper-controller in different training phases. Results are the average on six trials of hyper-controller training over different random seeds.
	}\label{fig:diff_phase}
\end{figure*}

\subsection{Hyperparameter Study}

\begin{figure*}[htb!]
	\centering
	\vspace{-5pt}
	\includegraphics[width=0.95\textwidth]{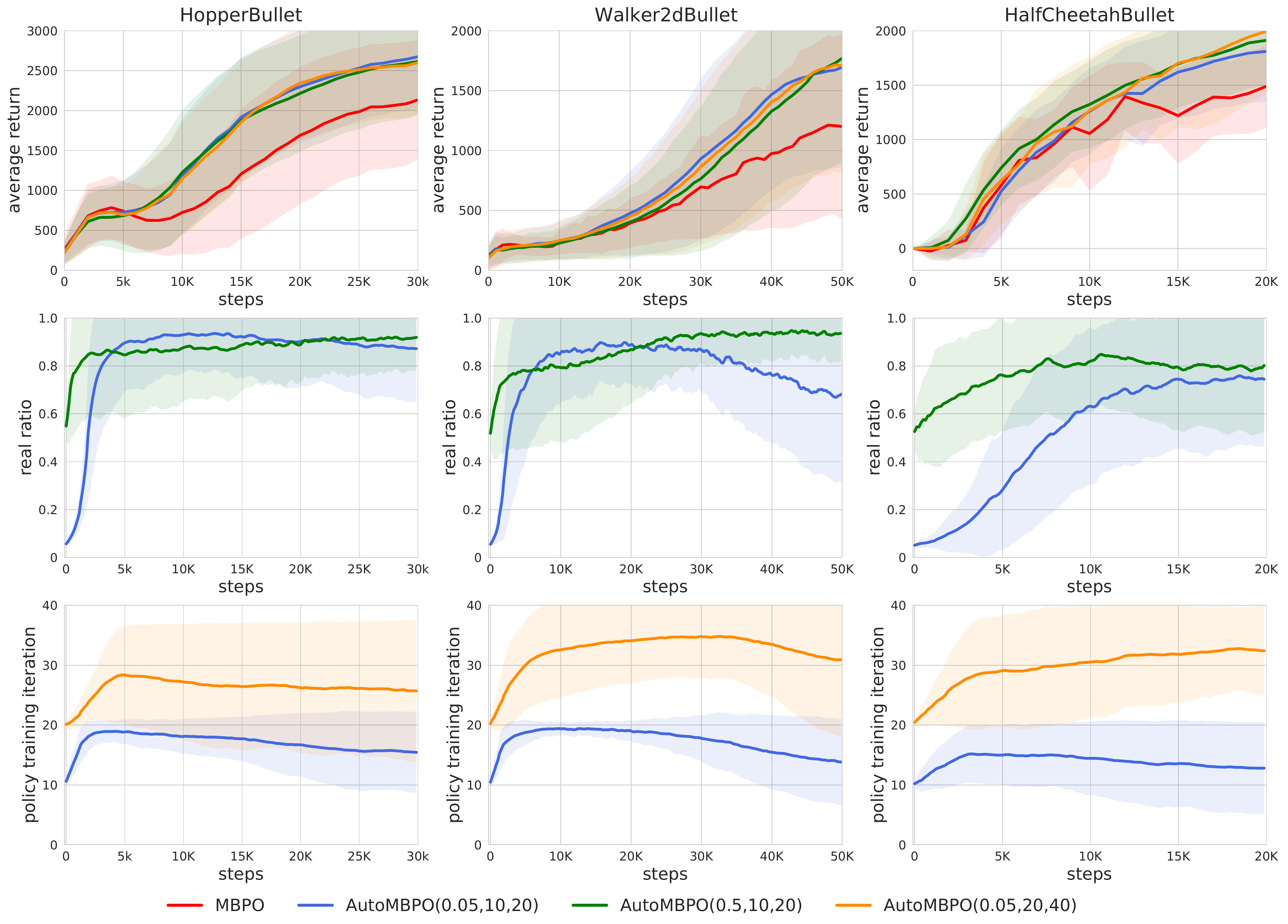}
	\vspace{-5pt}
	\caption{Top: Performance comparison of AutoMBPO(0.5, 10, 20), AutoMBPO(0.05, 20, 40) and the original AutoMBPO(0.05, 10, 20). Middle and bottom: the corresponding schedule of real ratio and policy training iteration.
	}\label{fig:diff_phase_0.5}
\end{figure*}

Since AutoMBPO utilizes the hyper-controller to adjust the hyperparameters of MBPO dynamically, e.g., $\pm1$, the initial values of these hyperparameters become the hyperparameters of AutoMBPO. We also want to investigate the effect of these initial values on the learning process. In the experiments of Section \ref{sec: exp}, we initialize the real ratio to $0.05$, the policy training iteration to $10$, and limit the policy training iteration to $[1,20]$ for computation efficiency. We denote the original one as AutoMBPO(0.05, 10, 20). In this section, we further conduct experiments of AutoMBPO(0.5, 10, 20) and AutoMBPO(0.05, 20, 40) on three PyBullet environments. The performance and the hyperparameter schedules are shown in Figure \ref{fig:diff_phase_0.5}.

From the comparison, we can find that changing the initial value of real ratio or policy training iteration does not influence the final performance much, which shows the robustness of AutoMBPO to the hyperparameter initialization. Notice that the policy training iteration of AutoMBPO(0.05, 20, 40) is much larger than that of AutoMBPO(0.05, 10, 20), while the performance does not improve much. This finding is consistent with our conclusion in Section \ref{sec:hyperparameter visualization}, i.e., increasing the policy training iteration is not always a good choice.

\subsection{State Feature Ablation}
In section \ref{sec:Hyper-MDP Formulation}, inspired by Theorem \ref{theorem: return bound}, we include the number of real samples $N_{\rm{real}}$ and the model loss $\mathcal{L}_{\hat{T}}$ into the state formulation of hyper-MDP. In this section, we ablate these two features and only use other features as the state to train the controller, denoted as AutoMBPO-SA. Results are shown in Figure \ref{fig:state_ablation}. We find that ablating these two features degrades the performance, which highlights the importance of elaborate state design for hyper-controller learning.
\begin{figure*}[htb]
	\centering
	\vspace{-4pt}
	\includegraphics[width=0.85\textwidth]{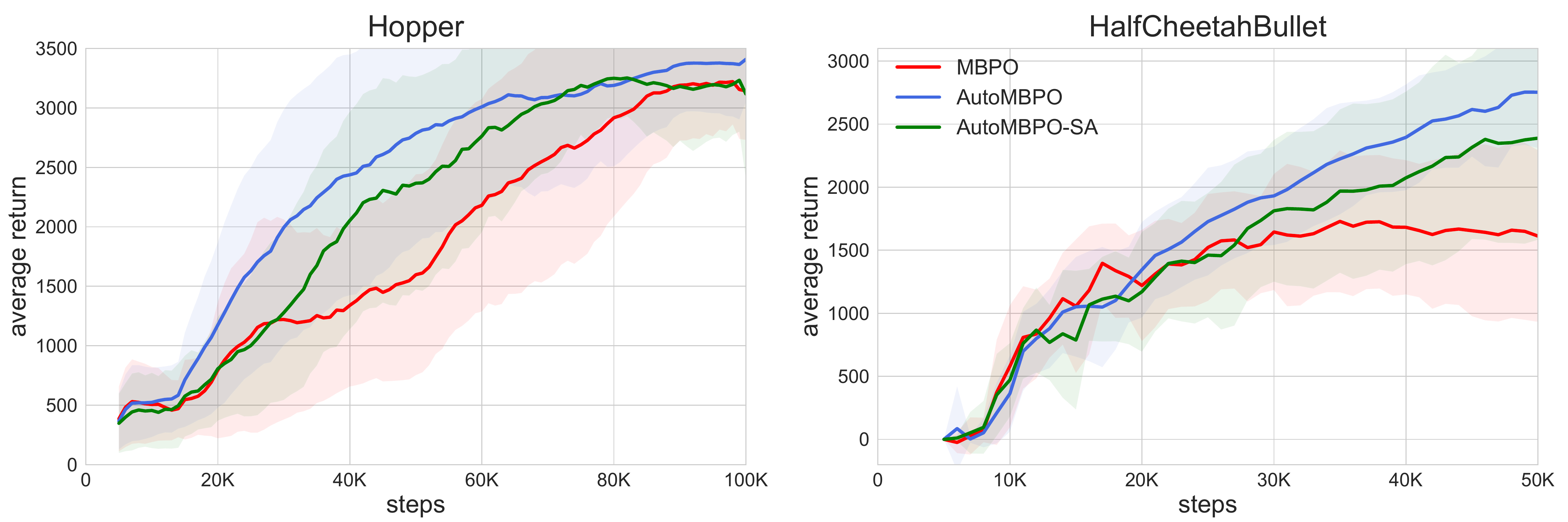}
	\vspace{-5pt}
	\caption{Results of the state feature ablation experiment. AutoMBPO-SA denotes the AutoMBPO variant of excluding real samples number and model loss from the hyper-MDP state.}
	
	\vspace{-10pt}
	\label{fig:state_ablation}
\end{figure*}

\subsection{Statistical Significance}
 Since the shaded areas in Figure \ref{fig:comparison result} overlap, we further add a t-test to the results of the original MBPO and AutoMBPO. We use the average return to perform t-test and list the p-values in Table \ref{tab:p_values}. From the result, the p-values are all less than 0.05, confirming the statistical significance of the results.
 
 \begin{table}[ht]
\centering
\caption{t-test to the average returns of the original MBPO and AutoMBPO.}\label{tab:p_values}
\vskip 0.10in
\begin{tabular}{@{}c|c|c|c|c|c|c@{}}
\toprule
& Hopper & Ant & Humanoid  & \tabincell{c}{Hopper\\Bullet} & \tabincell{c}{Walker2d\\Bullet} & \tabincell{c}{HalfCheetah\\Bullet}\\ \midrule
p-value & 0.0078 & 0.0031 & 6e-5 & 0.0052 & 9e-5 & 0.0276\\
\bottomrule
\end{tabular}
\end{table}
 
\section{Baseline Implementation Details}
\textbf{PBT.} The implementation of PBT mainly follows \citet{zhang2021importance}. Specifically, we train 10 MBPO instances in parallel, each with randomly initialized hyperparameters. After each episode, 20$\%$ of the MBPO instances with low returns are replaced by the top 20$\%$ (both hyperparameters, network parameters, and data buffer) or re-initialized (only hyperparameters) with a certain probability. 

\textbf{RoR.}
For RoR, we utilize the same Hyper-MDP definition as AutoMBPO since the original Hyper-MDP definition in \citet{dong2020intelligent} is not suitable for MBPO. Moreover, we run RoR 6 times, each using the same amount of data as our method. In the result of Figure \ref{fig:comparison result}, we compare to the best performing RoR for a fair comparison.

\section{Experimental Settings}
\label{appendix:experimental settings}
We first provide the details of the experimental environments in Table \ref{tab:environments}. Among them, Hopper, Ant, and Humanoid are the same version used in \citet{mbpo}. For the remaining three PyBullet environments, we modify the reward function to be similar to that of Mujoco to facilitate training since we found that the default reward setting is too complicated for MBPO to solve in preliminary experiments. Then, we present the experimental settings in different environments in Table \ref{tab:experiments}.

\begin{table}[htb!]
\centering
\caption{Environment settings in our experiments. $\theta_t$ denotes the joint angle, $x_t$ denotes the position in x-direction, $a_t$ denotes the action control input, and $z_t$ denotes the height.}\label{tab:environments}
\vskip 0.10in
\begin{tabular}{@{}c|c|c|c|c@{}}
\toprule
& \tabincell{c}{State\\Dimension} & \tabincell{c}{Action\\Dimension} & 
Reward Function & \tabincell{c}{Termination States\\Condition}  \\ \midrule
Hopper & 11 & 3 & $\dot{x}_{t}-0.001\left\|a_{t}\right\|_{2}^{2} + 1$ & $z_{t} \leq 0.7$ or $\theta_{t} \geq 0.2$ \\\midrule
Ant & 27 & 8 & $\dot{x}_{t}-0.5\left\|a_{t}\right\|_{2}^{2} + 1$ & $z_{t} \leq 0.2$ or $z_{t} \geq 1.0$ \\\midrule

Humanoid & 45 & 17 & 0.25$\dot{x}_{t}-0.1\left\|a_{t}\right\|_{2}^{2} + 5$ & $z_{t} \leq 1.0$ or $z_{t} \geq 2.0$ \\\midrule

\tabincell{c}{Hopper\\Bullet} & 15 & 3 & 5$\dot{x}_{t}-0.001\left\|a_{t}\right\|_{2}^{2} + 1$ & $z_{t} \leq 0.8$ or $|\theta_{t}| \geq 1.0$ \\\midrule
\tabincell{c}{Walker2d\\Bullet} & 22 & 6 & 5$\dot{x}_{t}-0.001\left\|a_{t}\right\|_{2}^{2} + 1$ & $z_{t} \leq 0.8$ or $|\theta_{t}| \geq 1.0$ \\\midrule
\tabincell{c}{HalfCheetah\\Bullet} & 26 & 6 & 5$\dot{x}_{t}-0.001\left\|a_{t}\right\|_{2}^{2}$ & None \\
\bottomrule
\end{tabular}
\end{table}

\begin{table}[ht]
\centering
\caption{Experimental settings in different environments. Specifically, a hyper-MDP episode consists of m target-MDP episodes, i.e., the whole training process of an MBPO instance, and a target-MDP episode consists of $H$ timesteps in the environments.} \label{tab:experiments}
\vskip 0.10in
\begin{tabular}{@{}c|c|c|c|c|c|c|c@{}}
\toprule
& &  {Hopper} & {Ant} & {Humanoid} & \tabincell{c}{Hopper\\Bullet} & \tabincell{c}{Walker2d\\Bullet} & \tabincell{c}{HalfCheetah\\Bullet}\\ \midrule
    &  \tabincell{c}{hyper-MDP episodes} & 100 & 100 & 200 & 200 & 200 & 100\\ \midrule
  $m$ &  \tabincell{c}{target-MDP epi-\\sodes for training} & 50 & 50 & 100 & 30 & 50 & 20\\ \midrule
$M$ & \tabincell{c}{target-MDP epi-\\sodes for evaluation} & 100 & 150 & 300 & 100 & 100 & 50\\ \midrule
$H$ & \tabincell{c}{timesteps per \\ target-MDP episode} & \multicolumn{6}{|c}{1000}\\
\bottomrule
\end{tabular}
\end{table}

\section{Computing Infrastructure}
\label{appendix:computing infrastructure}
We present the computing infrastructure and the corresponding computational time used to train the hyper-controller in Table \ref{tab:infra and time}.

\begin{table}[ht]
\centering
\caption{Computing infrastructure and the corresponding computational time.}\label{tab:infra and time}
\vskip 0.10in
\begin{tabular}{@{}c|c|c|c|c|c|c@{}}
\toprule
& Hopper & Ant & Humanoid  & \tabincell{c}{Hopper\\Bullet} & \tabincell{c}{Walker2d\\Bullet} & \tabincell{c}{HalfCheetah\\Bullet}\\ \midrule
CPU& \multicolumn{3}{|c|}{32 cores} & \multicolumn{3}{|c}{16 cores}\\\midrule
GPU& \multicolumn{3}{|c|}{RTX2080TI$\times$2} & \multicolumn{3}{|c}{V100 $\times$2}\\\midrule
\tabincell{c}{computation\\time in hours} & 90.92 & 95.05  & 245.33 & 56.91 & 149.88 & 30.48\\ 
\bottomrule
\end{tabular}
\end{table}

\section{Hyperparameters}
\label{appendix:hyperparameters}
Table \ref{tab:hyper} lists the hyperparameters used in training the hyper-controller. Other hyperparameters of MBPO not scheduled by the hyper-controller are the same as the original one \cite{mbpo}. Note that the hyperparameter $\tau$ in Humanoid varies from that in other environments since the original MBPO configuration of Humanoid is different. The original MBPO trains the model per 1000 real timesteps in Humanoid but per 250 real timesteps in other environments. So we set $\tau$ to half of the original interval to keep the maximum model training frequency two times of the original configuration.
\begin{table}[ht]
\centering
\caption{Hyperparameter settings for hyper-controller.} \label{tab:hyper}
\vskip 0.10in
\begin{tabular}{@{}c|c|c|c|c|c|c|c@{}}
\toprule
& &  {Hopper} & {Ant} & {Humanoid} & \tabincell{c}{Hopper\\Bullet} & \tabincell{c}{Walker2d\\Bullet} & \tabincell{c}{HalfCheetah\\Bullet}\\ \midrule

$\tau$ & \tabincell{c}{real timesteps\\interval per action}& \multicolumn{2}{|c|}{125} & 500 & \multicolumn{3}{c}{125} \\ \midrule

 &\tabincell{c}{policy network \\ architecture} & \multicolumn{6}{|c}{MLP with one hidden layer of size 256} \\\midrule
& learning rate & \multicolumn{6}{|c}{$3 \cdot 10^{-4}$}\\\midrule
& batch size & \multicolumn{6}{|c}{64}\\\midrule
& \tabincell{c}{policy updates per \\ hyper-MDP episode} & \multicolumn{6}{|c}{30}\\ \midrule
& \tabincell{c}{initial value \\ of real ratio} & \multicolumn{6}{|c}{0.05}\\ \midrule
& \tabincell{c}{initial value of poli-\\cy training iteration} & \multicolumn{6}{|c}{10}\\ \midrule
& \tabincell{c}{initial value of \\ rollout length} & \multicolumn{6}{|c}{1}\\ \midrule
$\epsilon$ & \tabincell{c}{PPO clip\\ constant}& \multicolumn{6}{|c}{0.2} \\\midrule
$c$ & \tabincell{c}{real ratio\\change constant} & \multicolumn{6}{|c}{1.2} \\\midrule
 & \tabincell{c}{penalty for each \\ model training} & \multicolumn{6}{|c}{0.1}\\

\bottomrule
\end{tabular}
\end{table}

\end{document}